\theoremstyle{plain}
\newtheorem{theorem}{Theorem}
\newtheorem{proposition}[theorem]{Proposition}
\newtheorem{corollary}[theorem]{Corollary}
\theoremstyle{remark}
\newtheorem{remark}[theorem]{Remark}
\def\BState{\State\hskip-\ALG@thistlm}
\begin{document}
	\twocolumn[
	\icmltitle{RaFM: Rank-Aware Factorization Machines}
	
	
	
	\icmlsetsymbol{equal}{*}
	
	\begin{icmlauthorlist}
		\icmlauthor{Xiaoshuang Chen}{tsinghua}
		\icmlauthor{Yin Zheng}{tencent}
		\icmlauthor{Jiaxing Wang}{ucas}
		\icmlauthor{Wenye Ma}{tencent}
		\icmlauthor{Junzhou Huang}{tencent}
	\end{icmlauthorlist}
	
	\icmlaffiliation{tsinghua}{Department of Electrical Engineering, Tsinghua University, Beijing, China}
	\icmlaffiliation{tencent}{Tencent AI Lab, Shenzhen, China}
	\icmlaffiliation{ucas}{Institute of Automation, Chinese Academy of Sciences, and University of Chinese Academy of Sciences, Beijing, China}
	
	\icmlcorrespondingauthor{Yin Zheng}{yinzheng@tencent.com}
	
	\icmlkeywords{Factorization Machines}
	
	\vskip 0.3in
	]
	
	\printAffiliationsAndNotice{}
	
	\begin{abstract}
		Fatorization machines (FM) are a popular model class to learn pairwise interactions by a low-rank approximation. Different from existing FM-based approaches which use a fixed rank for all features, this paper proposes a Rank-Aware FM (RaFM) model which adopts pairwise interactions from embeddings with different ranks. The proposed model achieves a better performance on real-world datasets where different features have significantly varying frequencies of occurrences. Moreover, we prove that the RaFM model can be stored, evaluated, and trained as efficiently as one single FM, and under some reasonable conditions it can be even significantly more efficient than FM. RaFM improves the performance of FMs in both regression tasks and classification tasks while incurring less computational burden, therefore also has attractive potential in industrial applications.
	\end{abstract}

	\section{Introduction}\label{section:introduction}
	Factorization machines (FM)~\cite{rendle2010factorization,rendle2012factorization} are one of the 
	most popular models to leverage the interactions between features. It models nested feature interactions 
	via a factorized parametrization, and achieves success in many sparse predictive areas, such as recommender systems and click-through rate predictions \cite{juan2016field}. Recently there are many follow-up researches, such as high-order FMs \cite{blondel2016higher}, convex FMs \cite{blondel2015convex}, neural-network-based FMs \cite{he2017neural,guo2017deepfm}, locally linear FMs \cite{liu2017locally}, etc.
	
	Most of the existing approaches allocate a fixed rank of embedding vectors to each feature~\cite{liu2017locally,guo2017deepfm,Zheng2016,zheng2016neural,du2018collaborative,lauly2017document,jiang2017}. However, the frequencies of different features occurring in real-world datasets vary a lot. Fig. \ref{fig:feature-freq} shows the numbers of occurrences of different features in two public datasets, i.e. MovieLens\footnote{\url{https://grouplens.org/datasets/movielens}} Tag and Avazu\footnote{\url{https://www.kaggle.com/c/avazu-ctr-prediction/data}} respectively. It is shown that a large number of features scarsely occur in the dataset, while only a few features frequently occur. Fixing the rank of FMs may lead to overfitting problems for features with few occurrences, and underfitting problems for features with many occurrences.
	
	\begin{figure}
		\centering
		\includegraphics[width=0.8\linewidth]{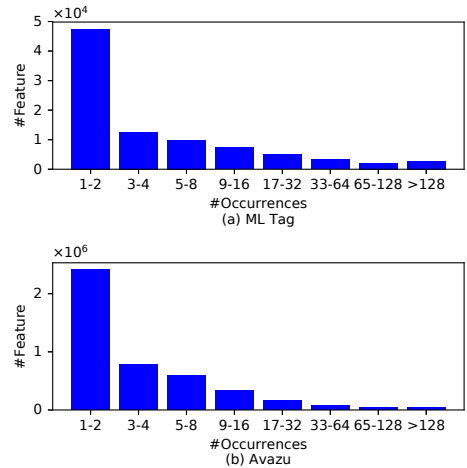}
				\vspace{-6mm}
		\caption{Occurrences of different features in (a) ML Tag; (b) Avazu. Example: in ML Tag, more than 40,000 features only occur once or twice, while less than 5,000 features occur more than 128 times.}
		\label{fig:feature-freq}
				\vspace{-5mm}
	\end{figure}
	
	
	
	There are also researches allocating with different ranks to each feature\cite{li2016difacto,li2017mixture,juan2016field}. However, training and storing multiple embeddings lead to unaffordable computational burden, and the large number of parameters may even aggravate the overfitting problem \cite{li2017mixture}. To this end, this paper proposes a Rank-Aware FM (RaFM) model, i.e., to maintain embedding vectors with different ranks for each feature, which makes it possible to compute each pairwise interaction via proper ranks of embedding vectors. A key contribution of this paper is to prove that {\bf although RaFM maintains multiple embeddings with different ranks for each feature, it can be stored, evaluated, and trained as efficiently as, or even more efficiently than a single FM with a fixed rank}.

	Specifically, we analyze the time and space complexity of RaFM, and show that there are many inactive factors which need not be stored and evaluated. Therefore, both the computational time and the storage can be significantly reduced, and the entire computational burden can be even smaller than that of a single FM under some reasonable conditions. Furthermore,  we provide an algorithm to train all embedding vectors of RaFM in one concise model where the inactive factors will not be stored and trained, and then prove the convergence rate and the performance bound of the training algorithm. Thus, the computational burden of the training process can also be effectively reduced. Experiments show the effectiveness of RaFM in both public datasets and datasets of industrial applications. The proposed RaFM model improves the performance of FMs while incurring a comparable or even less computational burden, therefore also has attractive potential in industrial applications.
	
	
		\vspace{-1mm}
	\section{Problem Formulation} \label{section:efm}
		\vspace{-1mm}
	\subsection{General Interaction Form of FMs}
	Here we provide a general interaction form of the FM model:
	\begin{equation} \label{eq:g-fm}
	\hat{y} = \sum_{i,j\in\mathcal{F},i<j}\left<\mathcal{V}_i,\mathcal{V}_j\right>x_ix_j + \sum_iw_ix_i + bias
	\end{equation}
	where $\mathcal{F}$ is the index set of features, $x_i$ is the value of the $i$-th feature, $w_i$ is the weight of the $i$-th feature in the linear part, and $bias$ is the bias of the model. $\mathcal{V}_i$ is the embedding of the $i$-th feature, describing the characteristics of this feature when interacting with other features. $\left<\mathcal{V}_i,\mathcal{V}_j\right>$ represents the bi-interaction between the $i$-th feature and the $j$-th feature.
	
	Clearly, the computation of the bi-interaction $\left<\mathcal{V}_i,\mathcal{V}_j\right>$ highly depends on the formulations of the embedding term $\mathcal{V}_i$. In the original FM model, $\mathcal{V}_i$ is a vector of a fixed rank, and $\left<\mathcal{V}_i,\mathcal{V}_j\right>$ is the dot product. Specifically, assume we have $m$ FMs, the rank of the $k$-th of which is $D_k$, then we have
	\begin{equation} \label{eq:fm}
	\left<\mathcal{V}_i,\mathcal{V}_j\right>_{FM_k} = \bm{v}_i^{(k)}\cdot\bm{v}_j^{(k)}=\sum_{f=1}^{D_k}v_{i,f}^{(k)}v_{j,f}^{(k)}
	\end{equation}
	where $\bm{v}_i^{(k)}$ is the embedding with dimension $D_k$, and $v_{i,f}^{(k)}$ is the $f$-th coordinate of $\bm{v}_i^{(k)}$. We assume the $m$ FMs are ordered such that $D_1\leq D_2 \leq \cdots \leq D_m$. Let $D = D_m$. We use $\bm{v}^{(k)}$ to represent the parameter set $\{\bm{v}_i^{(k)}:i\in\mathcal{F}\}$.
	
	According to Fig. \ref{fig:feature-freq}, the frequencies of different features occurring in real-world datasets vary a lot, hence fixed rank FMs may have unsatisfying performance on these datasets. Specifically, a certain embedding $\mathcal{V}_i$ will be trained only when $x_i$ are both nonzero. Therefore in the $k$-th FM, the embedding $\mathcal{V}_i = \bm{v}_i^{(k)}$ will suffer from underfitting if the $i$-th feature frequently occurs, and from overfitting if the feature scarsely occurs, as shown in Fig. \ref{fig:efm-scheme}(a)(b).
	
	\subsection{RaFM: Rank-Aware Factorization Machines}
	From the abovementioned discussions, it is beneficial to allocate different ranks of embedding vectors to different features. 
	In this paper, we allocate multiple embedding vectors to each feature in RaFM, i.e.,
	\begin{equation} \label{eq:efm}
	\mathcal{V}_i = \left\{\bm{v}_i^{(1)},\bm{v}_i^{(2)},\cdots, \bm{v}_i^{(k_i)}\right\}
	\end{equation}
	where the rank of the $k$-th vector is $D_k$. The value $k_i$ means that the maximum rank of the embedding vectors is $D_{k_i}$. $k_i$ can be interpreted as: if $k>k_i$, the rank of $\bm{v}_i^{(k)}$ will be too large compared to the occurrences of the $i$-th feature, which will lead to the overfitting problem. $k_i$ can be chosen according to the number of occurrences of the $i$-th feature, and in this paper, we regard $k_i$ as hyperparameters. Although some boosting
	methods~\cite{li2018boosting} might be used to select appropriate $k_i$, we leave it for future work.
	
	Here we emphasize that the multiple embeddings in $\mathcal{V}_i$ are not independent in the RaFM model. Intuitively speaking, $\bm{v}_i^{(1)}$ can be regarded as a ``projection'' of $\bm{v}_i^{(2)}$ from the $D_2$-dimensional space to the $D_1$-dimensional space. We will return to this point in Section \ref{section:train}, which provides an efficient training problem maintaining this property.
	 
    The pairwise interaction can be computed as
	\begin{equation} \label{eq:assumption}
	\left<\mathcal{V}_i,\mathcal{V}_j\right>_{RaFM} = \bm{v}_i^{(k_{ij})}\cdot\bm{v}_j^{(k_{ij})}
	\end{equation}
	where
	\begin{equation}
	    k_{ij}=\min(k_i,k_j)
	\end{equation}
	which means the dimensionality to compute the pairwise interaction between the $i$-th and $j$-th features depends on the maximum common dimensionality of the embedding vectors of the $i$-th and the $j$-th feature. Therefore, the computation of $\left<\mathcal{V}_i,\mathcal{V}_j\right>$ achieves a good trade-off between overfitting problems and underfitting problems, which is the intuition why RaFM outperforms FM. Fig.~\ref{fig:efm-scheme} also illustrates the idea of RaFM.
	
	\begin{figure}[t]
	    \centering
	    \includegraphics[width=\columnwidth]{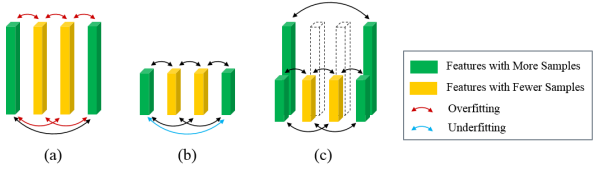}
	    \vspace{-5mm}
	    \caption{The intuition of RaFM. (a) FM with a high dimension; (b) FM with a low dimension; (c) RaFM}
	    \label{fig:efm-scheme}
	    \vspace{-5mm}
	\end{figure}
	
	A key challenge of RaFM is the computational burden when storing, evaluating, and training RaFM:
	\begin{enumerate}
		\item It is well-known that FM can be evaluated efficiently in $O(D\left|\mathcal{F}\right|)$. However, the straight computation of RaFM, i.e. \eqref{eq:g-fm}\eqref{eq:assumption}, is $O(D\left|\mathcal{F}\right|^2)$, which is not satifying.
		\item To train the RaFM, we need to train multiple embedding vectors for a feature, which incurs a large computational burden and storage consumption.
	\end{enumerate}
	
	We separate the discussions into two sections: Section \ref{section:complexity} shows that RaFM can be stored and evaluated as efficient, or even more efficient than a single FM, whereas Section \ref{section:train} provides an efficient training algorithm.
	
	\color{black}
	\vspace{-2mm}
	\section{Model Complexity of RaFM}
	\label{section:complexity}
	This section analyzes the space and time complexity of RaFM. Trivial bounds of the space and time complexity are $O(\sum_{k=1}^mD_k\left|\mathcal{F}\right|)$ and $O(D\left|\mathcal{F}\right|^2)$ respectively. In this section, they will be significantly improved to be comparable or even less than a single FM under reasonable conditions. 
	
	%
	
	
	\vspace{-2mm}
	\subsection{Space Complexity} \label{section:space-complexity}
	Before discussing the complexity of RaFM, we introduce a class of feature sets, i.e.
	\begin{equation}
	\mathcal{F}_k = \{i\in\mathcal{F}:k_i\geq k\}, k=1,2,\cdots,m
	\end{equation}
	then we have $\mathcal{F}_1 \supset \mathcal{F}_2 \cdots \supset \mathcal{F}_m$. We have $\mathcal{F}=\mathcal{F}_1$ due to $k_i\geq 1$ by definition.
	
	Although the RaFM in Eq. \eqref{eq:efm} contains $m$ group of embeddings with different ranks, not all the parameters each group of embeddings will be used. Specifically, if we use $\mathcal{F}-\mathcal{F}_k$ to denote the set difference of $\mathcal{F}$ and $\mathcal{F}_k$, then for $i\in\mathcal{F}-\mathcal{F}_{k}$ and $l>k$, the factor $\bm{v}_i^{(l)}$ will never be used according to Eq. \eqref{eq:assumption}. These factors are called \textit{inactive factors}. In other words, only $\bm{v}^{(k)}_i,i\in\mathcal{F}_k$ need to be maintained, which are called \textit{active factors}. Therefore we have
	\begin{proposition} \label{prop:space-complexity}
		The space complexity of parameters in RaFM \eqref{eq:efm} is $O \left( \sum_{k=1}^mD_k\left|\mathcal{F}_k\right| \right) $.
	\end{proposition}
	
	\vspace{-2mm}
	\subsection{Time Complexity} \label{section:time-complexity}
	To derive the time complexity of RaFM, we introduce two notations, i.e. $\mathcal{A}_{l,k}$ and $\mathcal{B}_{l,k}$. Specifically,
	\begin{equation} \label{eq:a-l-k}
	\begin{split}
	\mathcal{A}_{l,k} &= \sum_{i,j\in\mathcal{F}_k,i< j}\bm{v}_i^{(l)}\cdot\bm{v}_j^{(l)}x_ix_j \\ &= \frac{1}{2}\left(\left\|\sum_{i\in\mathcal{F}_k}\bm{v}_i^{(l)}x_i\right\|_2^2-\sum_{i\in\mathcal{F}_k}\left\|\bm{v}_i^{(l)}x_i\right\|_2^2\right)
	\end{split}
	\end{equation}
	And when $l\leq k$, $\mathcal{B}_{l,k}$ is defined as
	\begin{equation} 
	\mathcal{B}_{l,k} = \sum_{i< j}\bm{v}_i^{(k_{ij}|_{[l,k]})}\cdot\bm{v}_j^{(k_{ij}|_{[l,k]})}x_ix_j
	\end{equation}
	where $k_{ij}|_{[l,k]}=\max[l,\min(k,k_{ij})]$.
	
	$\mathcal{A}_{l,k}$ and $\mathcal{B}_{l,k}$ are introduced in order to transform the original RaFM in Eq.~\eqref{eq:g-fm}\eqref{eq:assumption} to a computationally efficient form. Actually, the following properties are obvious:
	\begin{proposition} \label{prop:a-b-simple}
	We have the following properties regarding $\mathcal{A}_{l,k}$ and $\mathcal{B}_{l,k}$:
	\begin{enumerate}
		\item $\mathcal{A}_{k,1}$ and $\mathcal{B}_{k,k}$ both denote the $k$-th FM model in Eq.\eqref{eq:fm}.
		\item The RaFM model in Eq. \eqref{eq:g-fm}\eqref{eq:assumption} is $\mathcal{B}_{1,m}$;
		\item The computational complexity of $\mathcal{A}_{l,k}$ is $O(D_l\left|\mathcal{F}_k\right|)$ due to the second equality in Eq. \eqref{eq:a-l-k}.
	\end{enumerate}
	\end{proposition}
	The proof is omitted since these statements are obvious. According to Proposition \ref{prop:a-b-simple}, the time complexity of $\mathcal{A}_{l,k}$ is known, and all we need to do is to compute $\mathcal{B}_{l,k}$ according to $\mathcal{A}$. To achieve this, we provide the following theorem:
	\begin{theorem} \label{thm:a-b}
	$\mathcal{A}_{l,k}$ and $\mathcal{B}_{l,k}$ satisfy the following equality: $\mathcal{B}_{l,k+1} = \mathcal{B}_{l,k}-\mathcal{A}_{k,k+1}+\mathcal{A}_{k+1,k+1}$
	\end{theorem}
	\begin{proof}
	It is easy to show that
		\begin{equation}
		\begin{split}
		\mathcal{B}_{l,k+1}=\mathcal{B}_{l,k}&- \sum_{i< j,k_{ij}> k}\bm{v}_i^{(k)}\cdot\bm{v}_j^{(k)}x_ix_j\\
		&+\sum_{i< j,k_{ij}> k}\bm{v}_i^{(k+1)}\cdot\bm{v}_j^{(k+1)}x_ix_j
		\end{split}
		\end{equation}
		Moreover, according to \eqref{eq:assumption}, the feature set $\{i<j:k_{ij}>k\}$ can be rewritten as
		\begin{equation}
		\{i<j:k_{ij}>k\}=\{i<j:i,j\in\mathcal{F}_{k+1}\}
		\end{equation}
		Then we have the 4th property according to the definition of $\mathcal{A}_{k,k+1}$ and $\mathcal{A}_{k+1,k+1}$.
	\end{proof}
	
	\begin{corollary} \label{corollary:time}
	    Regarding the computational complexity of RaFM, we have
		\begin{enumerate}
		\item The time complexity of $\mathcal{B}_{l,k}$ is $O \left(D_l\left|\mathcal{F} \right|+ \sum_{p=l+1}^kD_p\left|\mathcal{F}_p\right| \right) $;
		\item The time complexity of the RaFM model \eqref{eq:efm}, i.e. $\mathcal{B}_{1,m}$, is $O \left( \sum_{k=1}^mD_k\left|\mathcal{F}_k\right| \right) $.
		\end{enumerate}
	\end{corollary}
	\begin{proof}
		We only prove the 1st statement, and the 2nd statement is the direct corollary of the 1st when $l=1,k=m$.
		
		We prove by induction. When $k=l$, the time complexity of $\mathcal{B}_{l,l}$ is $O\left(D_l\left|\mathcal{F}\right|\right)$. If the time complexity of $\mathcal{B}_{l,k}$ is $O \left(D_l\left|\mathcal{F} \right|+ \sum_{p=l+1}^kD_p\left|\mathcal{F}_p\right| \right) $, then the time complexity of $\mathcal{B}_{l,k+1}$ should be
		\begin{equation}
		\begin{split}
		&O \left(D_l\left|\mathcal{F} \right|+ \sum_{p=l+1}^kD_p\left|\mathcal{F}_p\right| \right)  + O\left(D_k\left|\mathcal{F}_{k+1}\right|\right) \\&+ O\left(D_{k+1}\left|\mathcal{F}_{k+1}\right|\right) = O \left(D_l\left|\mathcal{F} \right|+ \sum_{p=l+1}^{k+1}D_p\left|\mathcal{F}_p\right| \right)
		\end{split}
		\end{equation}
		where the fact $D_k\leq D_{k+1}$ is used.
	\end{proof}
	
	\begin{remark}
		Similar to FM, when the data is sparse, $\left|\mathcal{F}_k\right|$ should be replaced by $n(\mathcal{F}_k)$, which is the expected number of occurrences of $\mathcal{F}_k$ in a data sample. In such cases, the time complexity is in the sense of expectation.
	\end{remark} 
	
	\subsection{Comparison with FM} \label{section:complexity-discussion}
	This section compares the complexity of RaFM with that of FM. When using a single FM as the predictor, we usually use a large rank to ensure the performance, and use regularization to avoid overfitting. Here we use the FM with rank $D_m$ for comparison, of which the space and time complexities are $O(D\left|\mathcal{F}\right|)$ and $O(Dn(\mathcal{F}))$ respectively.
	
	The space and time complexity of RaFM are similar to each other except that $\left|\mathcal{F}_k\right|$ should be replaced by $n(\mathcal{F}_k)$ in the time complexity, so we discuss them together. Generally, when $k$ increases, $D_k$ will increase and $\left|\mathcal{F}_k\right|$ will decrease. Table \ref{tab:complexity} provides the complexity under different speeds of $D_k$ increasing and $\left|\mathcal{F}_k\right|$ decreasing. It is shown that if $D_k$ increases and $\left|\mathcal{F}_k\right|$ decreases moderately (e.g. linearly), the complexity of RaFM will be large. However, if one or two of them vary rapidly (e.g. exponentially), the complexity of RaFM will be comparable or smaller than the FM model.
	
	\begin{table}
		\centering
		\begin{small}
			\caption{Complexity of RaFM under different conditions}
			\resizebox{\linewidth}{\height}{
				\begin{tabular}{ccc}
					\hline
					&$D_k=\Theta(\frac{k}{m}D)$&$D_k=\Theta(2^{k-m}D)$ \\
					\hline
					$\left|\mathcal{F}_k\right|=\Theta((1-\frac{k-1}{m})\left|\mathcal{F}\right|)$&$O(mD\left|\mathcal{F}\right|)$&$O(D\left|\mathcal{F}\right|)$\\
					\hline
					$\left|\mathcal{F}_k\right|=\Theta(2^{1-k}\left|\mathcal{F}\right|)$&$O(D\left|\mathcal{F}\right|)$&$O(\frac{m}{2^{m-1}}D\left|\mathcal{F}\right|)$\\
					\hline
				\end{tabular}
			}
			\label{tab:complexity}
		\end{small}
	\end{table}
	
	In practice, it is widely-accepted that $D_k$ varies rapidly when $k$ increases \cite{he2017neural,li2017mixture}. As indicated by Fig. \ref{fig:feature-freq}, $\left|\mathcal{F}_k\right|$ decreases rapidly when $k$ increases. In contrast, the speed of $n(\mathcal{F}_k)$ decreasing may not be as rapidly as that of $\left|\mathcal{F}_k\right|$, but is still likely to be superlinear. Therefore, in such conditions, \textit{RaFM will significantly reduce the space complexity of FM while incur a comparable or smaller time complexity.} This statement will also be validated by experiments in Section \ref{section:experiment-complexity}.
	
		\vspace{-2mm}
	\section{Efficient Learning of RaFM} \label{section:train}
	
	\begin{algorithm}
		\caption{Training the RaFM} \label{algo:learning}
		\begin{algorithmic}[1]
			\STATE Initialize all the parameters
			
			\WHILE{not convergent}
			
			\STATE Sample a data point $(\bm{x},y)$ randomly
			
			\FOR{$1\leq p<m$}
			\STATE \hspace{-4mm}$\left.\bm{v}^{(p)}\right|_{\mathcal{F}_{p+1}} \gets \left.\bm{v}^{(p)}\right|_{\mathcal{F}_{p+1}} - \rho_d\frac{\partial L(\mathcal{B}_{1,p},\mathcal{B}_{1,p+1})}{\partial \left.\bm{v}^{(p)}\right|_{\mathcal{F}_{p+1}}}$
			
			\STATE \hspace{-4mm}$\left.\bm{v}^{(p)}\right|_{\mathcal{F}_p-\mathcal{F}_{p+1}} \gets \left.\bm{v}^{(p)}\right|_{\mathcal{F}_p-\mathcal{F}_{p+1}} - \rho_f\frac{\partial L(\mathcal{B}_{1,m}, y)}{\partial \left.\bm{v}^{(p)}\right|_{\mathcal{F}_p-\mathcal{F}_{p+1}}}$
			\ENDFOR
			\STATE $\left.\bm{v}^{(m)}\right|_{\mathcal{F}_m} \gets \left.\bm{v}^{(m)}\right|_{\mathcal{F}_m} - \rho_f\frac{\partial L(\mathcal{B}_{1,m}, y)}{\partial \left.\bm{v}^{(m)}\right|_{\mathcal{F}_m}}$
			
			\ENDWHILE
		\end{algorithmic}
				\vspace{-1mm}
	\end{algorithm}
	
	This section provides a computationally efficient learning algorithm of RaFM, i.e. Algorithm \ref{algo:learning}, where the inactive factors need not be stored and trained. We first provide the objective function and the learning algorithm, then prove that the proposed algorithm is to train the upper bounds of all the $m$ FMs simultaneously.
	
	We emphasize that although the analysis in this section is somehow technical, the final algorithm, i.e., Algorithm \ref{algo:learning}, can be regarded as an extension to SGD methods, hence is easy to implement.
	
	\subsection{Constrained Optimization of RaFM}\label{section:train-obj}
	The goal of the training algorithm is to obtain the parameters in $\mathcal{B}_{1,m}$. According to Eq.~\eqref{eq:a-l-k} and Theorem ~\ref{thm:a-b}, the parameters are $\left.\bm{v}^{(p)}\right|_{\mathcal{F}_p},\forall 1\leq p\leq m$. In other words, $\left.\bm{v}^{(p)}\right|_{\mathcal{F}-\mathcal{F}_p},\forall 1\leq p\leq m$ are the inactive factors that need not be trained. In order to avoid the training of inactive factors, we provide the following bi-level optimization model:
	\begin{subequations} \label{eq:efm-train-opt}
		\begin{align} 
		&\min \frac{1}{N}\sum_{\bm{x}}L(\mathcal{B}_{1,m}, y) \label{eq:efm-train-opt-obj}\\
		\text{s.t.~} &\left.\bm{v}^{(p)}\right|_{ \mathcal{F}_{p+1}} = \arg\min \frac{1}{N}\sum_{\bm{x}}L(\mathcal{B}_{1,p},\mathcal{B}_{1,p+1}),\forall 1\leq p < m \label{eq:efm-train-opt-constraint}
		\end{align}
	\end{subequations}	
	where $L$ is the loss function, and $\left.\bm{v}^{(p)}\right|_{ \mathcal{F}_{p+1}}$ denotes the $\bm{v}_i^{(p)}$ where $i\in\mathcal{F}_{p+1}$.
	
	
	The variables in Eq. \eqref{eq:efm-train-opt} can be classified into two groups, i.e. free variables and dependent variables.  $\left.\bm{v}^{(p)}\right|_{\mathcal{F}_p - \mathcal{F}_{p+1}},\forall 1\leq p< m$ and $\left.\bm{v}^{(m)}\right|_{ \mathcal{F}_m}$ are free variables, while $\left.\bm{v}^{(p)}\right|_{ \mathcal{F}_{p+1}}, \forall 1\leq p<m$ are dependent variables since they are determined by $\left.\bm{v}^{(p+1)}\right|_{ \mathcal{F}_{p+1}}$ via the constraint \eqref{eq:efm-train-opt-constraint}. The basic idea of Eq. \eqref{eq:efm-train-opt} is to regard $\bm{v}_i$ in its highest dimensionality as free variables to be optimized, and to regard other lower-dimensionality counterparts as its ``projections'' in lower dimensions, which can be approximated by the constraint \eqref{eq:efm-train-opt-constraint}.
	Note that inactive factors $\left.\bm{v}^{(p)}\right|_{\mathcal{F}-\mathcal{F}_p},1\leq p \leq m$ does not exist in Eq. \eqref{eq:efm-train-opt}. Therefore, Eq. \eqref{eq:efm-train-opt} makes it possible to maintain the model size given by Proposition \ref{prop:space-complexity} in the training process.
	
		\vspace{-2mm}
	\subsection{Learning} \label{section:train-sgd}
	This subsection will show that Eq. \eqref{eq:efm-train-opt} can be efficiently trained. Due to \eqref{eq:efm-train-opt-constraint}, the dependent variables $\left.\bm{v}^{(p)}\right|_{\mathcal{F}_{p+1}}$ can be obtained by taking the stochastic gradient descent (SGD) algorithm on the first term in $L(\mathcal{B}_{1,p},\mathcal{B}_{1,p+1})$. The major challenge is the estimation of the gradient direction of the free variables due to that the optimization problem \eqref{eq:efm-train-opt} is a multi-stage optimization problem. Taking $\left.\bm{v}^{(p)}\right|_{\mathcal{F}_p-\mathcal{F}_{p+1}}$ for a certain $p$ as an example, the gradient with respect to $\left.\bm{v}^{(p)}\right|_{\mathcal{F}_p-\mathcal{F}_{p+1}}$ should be
	\begin{equation} \label{eq:grad}
	\begin{split}
	grad= \frac{1}{N}\sum_{\bm{x}}&\left(\frac{\partial L(\mathcal{B}_{1,m},y)}{\partial \left.\bm{v}^{(p)}\right|_{\mathcal{F}_p-\mathcal{F}_{p+1}}} \right. \\
	&\left.+ \frac{\partial L(\mathcal{B}_{1,m},y)}{\partial \left.\bm{v}^{(p-1)}\right|_{\mathcal{F}_p}} \frac{\partial \left.\bm{v}^{(p-1)}\right|_{\mathcal{F}_p}}{\partial \left.\bm{v}^{(p)}\right|_{\mathcal{F}_p-\mathcal{F}_{p+1}}}\right)
	\end{split}
	\end{equation}
	where $\partial \left.\bm{v}^{(p-1)}\right|_{\mathcal{F}_p} / \partial \left.\bm{v}^{(p)}\right|_{\mathcal{F}_p-\mathcal{F}_{p+1}}$ is a $\left|\mathcal{F}_p\right|D_{p-1}\times (\left|\mathcal{F}_{p}\right|-\left|\mathcal{F}_{p+1}\right|)D_p$ matrix representing the relationship between the dependent variable $\left.\bm{v}^{(p-1)}\right|_{\mathcal{F}_p}$ and the free variable $\left.\bm{v}^{(p)}\right|_{\mathcal{F}_p-\mathcal{F}_{p+1}}$ given by the constraint \eqref{eq:efm-train-opt-constraint}. According to the theorem of implicit functions,
	\begin{equation} \label{eq:implicit}
	\begin{split}
	\frac{\partial \left.\bm{v}^{(p-1)}\right|_{\mathcal{F}_p}}{\partial \left.\bm{v}^{(p)}\right|_{\mathcal{F}_p-\mathcal{F}_{p+1}}}=& \left[\frac{1}{N}\sum_{\bm{x}}\frac{\partial^2 L(\mathcal{B}_{1,p-1},\mathcal{B}_{1,p})}{\partial \left(\left.\bm{v}^{(p-1)}\right|_{\mathcal{F}_p}\right)^2}\right]^{-1} \\
	&
	\left[\frac{1}{N}\sum_{\bm{x}}\frac{\partial^2 L(\mathcal{B}_{1,p-1},\mathcal{B}_{1,p})}{\partial \left.\bm{v}^{(p-1)}\right|_{\mathcal{F}_p} \partial \left.\bm{v}^{(p)}\right|_{\mathcal{F}_p-\mathcal{F}_{p+1}}}\right]
	\end{split}
	\end{equation}
	Apply Eq. \eqref{eq:implicit} to Eq. \eqref{eq:grad},
	\begin{equation} \label{eq:grad-derive}
	\begin{split}
	&grad = \frac{1}{N}\sum_{\bm{x}}\frac{\partial L(\mathcal{B}_{1,m},y)}{\partial \left.\bm{v}^{(p)}\right|_{\mathcal{F}_p-\mathcal{F}_{p+1}}}\\ &+\frac{1}{N^2}\sum_{\bm{x,x'}} \frac{\partial L(\mathcal{B}_{1,m},y)}{\partial \left.\bm{v}^{(p-1)}\right|_{\mathcal{F}_p}}
	\bm{G}^{-1}
	\frac{\partial^2 L(\mathcal{B}'_{1,p-1},\mathcal{B}'_{1,p})}{\partial \left.\bm{v}^{(p-1)}\right|_{\mathcal{F}_p} \partial \left.\bm{v}^{(p)}\right|_{\mathcal{F}_p-\mathcal{F}_{p+1}}}
	\end{split}
	\end{equation}
	where $\bm{G} = \frac{1}{N}\sum_{\bm{x}}\left[\partial^2 L/\partial \left(\left.\bm{v}^{(p-1)}\right|_{\mathcal{F}_p}\right)^2\right]$ is the first term in the right of \eqref{eq:implicit}, which is a $\left|\mathcal{F}_p\right|D_{p-1}\times \left|\mathcal{F}_p\right|D_{p-1}$ matrix. $\mathcal{B}'_{l,k}$ denotes the $\mathcal{B}_{l,k}$ of input vector $\bm{x}'$. By exchanging $\bm{x}$ and $\bm{x}'$ in the second term of the right of \eqref{eq:grad-derive}, and using one sample $\bm{x}$ to estimate the gradient, we have
		\vspace{-1mm}
	\begin{equation} \label{eq:grad-sgd}
	\begin{split}
	&\widehat{grad} = \frac{\partial L(\mathcal{B}_{1,m},y)}{\partial \left.\bm{v}^{(p)}\right|_{\mathcal{F}_p-\mathcal{F}_{p+1}}} \\ &+\frac{1}{N}\sum_{\bm{x'}} \frac{\partial L(\mathcal{B}'_{1,m},y)}{\partial \left.\bm{v}^{(p-1)}\right|_{\mathcal{F}_p}}
	\bm{G}^{-1}
	\frac{\partial^2 L(\mathcal{B}_{1,p-1},\mathcal{B}_{1,p})}{\partial \left.\bm{v}^{(p-1)}\right|_{\mathcal{F}_p} \partial \left.\bm{v}^{(p)}\right|_{\mathcal{F}_p-\mathcal{F}_{p+1}}}
	\end{split}
	\end{equation}
	
	\vspace{-1mm}
	In the right of Eq. \eqref{eq:grad-sgd}, the first term can be obtain by the chain rule, while the second term contains second order derivatives which are challenging to compute. However, we have the following theorem:
	\begin{theorem} \label{thm:sgd}
		The direction of $\widehat{grad}$ is parallel to its first term $\partial L(\mathcal{B}_{1,m},y)/\partial \left.\bm{v}^{(p)}\right|_{\mathcal{F}_p-\mathcal{F}_{p+1}}$.
	\end{theorem}
		\vspace{-3mm}
	\begin{proof}
		See Section 1 in the supplementary material.
	\end{proof}
		\vspace{-3mm}
	
	In SGD, the direction of the gradient vector is more important than the length. Theorem \ref{thm:sgd} shows that we can use the first term in Eq. \eqref{eq:grad-sgd} to estimate the descent direction over free variables. The same discussion can be applied to $\left.\bm{v}^{(m)}\right|_{\mathcal{F}_m}$, therefore we can use Algorithm \ref{algo:learning} to learn Eq. \eqref{eq:efm-train-opt}, where $\rho_d$ and $\rho_f$ are the learning rates of dependent variables and free variables, respectively.
	
	Now we discuss the complexity of Algorithm \ref{algo:learning}. Note that in each step, only active factors are used and updated, which means the space complexity follows Proposition \ref{prop:space-complexity}. Moreover, we do not need to compute $\mathcal{B}_{1,p}$ seperately since the it is a part of $\mathcal{B}_{1,m}$. Therefore, the time complexity also follows Corollary \ref{corollary:time}. Therefore, as discussed in Section \ref{section:complexity-discussion}, training RaFM via Algorithm \ref{algo:learning} is as efficient as, or more efficient than training a single FM.
	
	Moreover, it can be proven that the proposed learning algorithm minimizes an upper bound of each FM with a single rank $D_k,1\leq k\leq m$. We put the proof in Section 2 of the supplementary material due to space constraints.

	\section{Related Work} \label{section:related}
	The most related researches on the combination of FMs with different ranks are DiFacto\cite{li2016difacto}, MRMA\cite{li2017mixture}, and FFM\cite{juan2016field}. We compare them with RaFM thoroughly in this section. There are many other researches to improve the performance of FMs by deep models, such as NFM\cite{he2017neural}, AFM\cite{xiao2017attentional}, DeepFM\cite{guo2017deepfm}, etc. The idea of RaFM is orthogonal to these researches. It is also an attractive direction to combine RaFM with these models, and we leave it for future work.
	\subsection{DiFacto}
		DiFacto also uses multiple ranks in one FM. However, for each feature, DiFacto only allocates an embedding vector with a single rank, while RaFM allocates multiple embeddings with different ranks. In DiFacto, the pairwise interaction between features with different ranks is obtained by simply truncating the embedding with a higher rank to a lower rank.
		In cases such as SVD of a complete matrix, such truncations are reasonable, since the best $k$-rank approximation is equivalent to the $k$-prefix of any $k+n$-rank approximation. However, in recommender systems where the training samples are highly sparse,
		such truncations usually lead to worse performances, as will be shown in Section \ref{section:experiment}. Therefore, DiFacto reduces the computational burden of FM by sacrificing the performance, while RaFM improves the performance of FM with a lower computational burden.

	\subsection{MRMA}
	MRMA is a matrix approximation model, of which the key idea is also to combine models with different ranks. The major difference is that it stores the entire models with different ranks, thus leading to a large computational burden and storage burden. Moreover, the large number of parameters may also cause severe overfitting problems, and this is why \cite{li2017mixture} provides the Iterated Conditional Modes (ICM) to train MRMA. In contrast, RaFM will significantly reduce the number of parameters by eliminating inactive factors, and will not be likely to suffer from overfitting problems.
	
	\subsection{FFM}
	RaFM is totally different from FFM, although they both use multiple embeddings for each feature. On the one hand, FFM uses different embeddings for interactions between different field-pairs, while the concept ``field'' never exists in RaFM. For problems without the concept of fields or problems with only 2 fields, FFM fails or degenerates to the original FM, while RaFM still works. On the other hand, different embeddings in FFM are independent, while embeddings in a lower rank can be regarded as the projection of embeddings in higher rank, which is guaranteed by the learning algorithm. This property largely reduces the computational burden and avoids the overfitting problem, while FFM suffers from the large computational burden.
	
	\begin{figure}
		\centering
		\includegraphics[width=0.9\columnwidth]{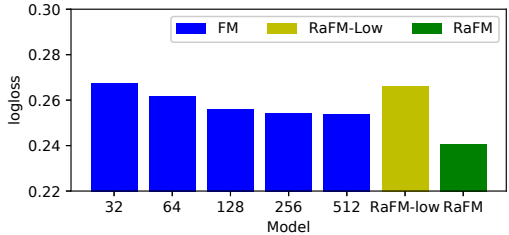}
				\vspace{-3mm}
		\caption{FM vs. RaFM in ML Tag}
		\vspace{-3mm}
		\label{fig:fm-vs-efm}
	\end{figure}
	
	\begin{table}
		\centering
		\caption{Performance and Complexity under Different Settings}
		\begin{tabular}{cccc}
			\hline
			&logloss&\#param & train time \\
			\hline
			FM($D$=512)&0.2538&46.40M&$1\times$ \\
			\hline
			RaFM($S_1$)&0.2405&17.89M&$0.43\times$ \\
			\hline
			RaFM($S_2$)&0.2416&9.63M&$0.17\times$ \\
			\hline
			RaFM($S_3$)&0.2387&9.23M&$0.24\times$ \\
			\hline
			RaFM($S_4$)&0.2391&17.75M&$1.35\times$\\	
			\hline
		\end{tabular}
		\label{tab:complexity-comparison}
	\end{table}
	
	\vspace{-2mm}
	\section{Experiments} \label{section:experiment}
	This section provides the experiments of the proposed approach and other benchmarks in several datasets. In Experiment A, we test our approach on 7 public datasets while in Experiment B, we perform the RaFM approach on the news CTR data provided by Tencent to show the effectiveness of the proposed approach in industrial applications. The code of RaFM is available at \url{https://github.com/cxsmarkchan/RaFM}.
	\begin{table*}[t]
		\centering
		\caption{Results on Regression Tasks}
		\begin{tabular}{cccccccccc}
			\hline
			\hline
			&\multicolumn{3}{c}{ML 10M}&\multicolumn{3}{c}{ML 20M}&\multicolumn{3}{c}{AMovie} \\
			\cline{2-10}
			&square&\multirow{2}{*}{\#param}&train/test&square&\multirow{2}{*}{\#param}&train/test&square&\multirow{2}{*}{\#param}&train/test\\
			&loss&&time&loss&&time&loss&&time\\
			\hline
			\multirow{2}{*}{FM}&0.8016&\multirow{2}{*}{2.66M}&\multirow{2}{*}{$1\times$}&0.8002&\multirow{2}{*}{5.45M}&\multirow{2}{*}{$1\times$}&1.0203&\multirow{2}{*}{3.25M}&\multirow{2}{*}{$1\times$} \\
			&$\pm$0.0010&&&$\pm$0.0008&&&$\pm$0.0046&&\\
			\multirow{2}{*}{DiFacto}&0.7950&\multirow{2}{*}{1.79M}&$0.82\times$/&0.7948&\multirow{2}{*}{3.22M}&$0.70\times$/&1.0268&\multirow{2}{*}{1.76M}&$0.75\times$/\\
			&$\pm$0.0011&&$0.95\times$&$\pm$0.0005&&$0.80\times$&$\pm$0.0051&&$0.75\times$\\
			\multirow{2}{*}{MRMA}&0.7952&\multirow{2}{*}{4.11M}&$1.27\times$/&0.7855&\multirow{2}{*}{8.43M}&$1.19\times$/&1.0071&\multirow{2}{*}{5.02M}&$1.27\times$/\\
			&$\pm$0.0006&&$1.43\times$&$\pm$0.0011&&$1.38\times$&$\pm$0.0039&&$1.27\times$\\
			\multirow{2}{*}{RaFM}&\textbf{0.7870}&\multirow{2}{*}{1.57M}&$0.95\times$/&\textbf{0.7807}&\multirow{2}{*}{3.63M}&$0.74\times$/&\textbf{0.9986}&\multirow{2}{*}{1.76M}&$0.75\times$/\\
			&\textbf{$\pm$0.0008}&&$1.12\times$&\textbf{$\pm$0.0009}&&$0.85\times$&\textbf{$\pm$0.0035}&&$0.75\times$\\
			\hline
			\hline
		\end{tabular}
		\label{tab:result-regression}
	\end{table*}

	\begin{table*}[t]
		\centering
		\caption{Results on Classification Tasks}
		\begin{tabular}{ccccccccc}
			\hline
			\hline
			&\multicolumn{4}{c}{Frappe}&\multicolumn{4}{c}{ML Tag} \\
			\cline{2-9}
			&log loss&AUC&\#param&train/test time&log loss&AUC&\#param&train/test time\\
			\hline
			\multirow{2}{*}{FM}&0.1702&0.9771&\multirow{2}{*}{1.38M}&\multirow{2}{*}{$1\times$}&0.2538&0.9503&\multirow{2}{*}{46.40M}&\multirow{2}{*}{$1\times$} \\
			&$\pm$0.0023&$\pm$0.0008&&&$\pm$0.0009&$\pm$0.0006&& \\
			\multirow{2}{*}{DiFacto}&0.1711&0.9771&\multirow{2}{*}{0.61M}&$0.63\times$/&0.2529&0.9450&\multirow{2}{*}{16.97M}&$0.42\times$/\\
			&$\pm$0.0023&$\pm$0.0004&&$0.85\times$&$\pm$0.0007&$\pm$0.004&&$0.83\times$ \\
			\multirow{2}{*}{RaFM}&\textbf{0.1447}&\textbf{0.9811}&\multirow{2}{*}{0.71M}&$0.73\times$/&\textbf{0.2387}&\textbf{0.9526}&\multirow{2}{*}{9.23M}&$0.24\times$/\\
			&\textbf{$\pm$0.0015}&\textbf{$\pm$0.0002}&&$0.85\times$&\textbf{$\pm$0.0005}&\textbf{$\pm$0.0006}&&$0.71\times$ \\
			\hline
			\hline
			&\multicolumn{4}{c}{Avazu}&\multicolumn{4}{c}{Criteo} \\
			\cline{2-9}
			&log loss&AUC&\#param&train/test time&log loss&AUC&\#param&train/test time\\
			\hline
			\multirow{2}{*}{FM}&0.3817&0.7761&\multirow{2}{*}{18.81M}&\multirow{2}{*}{$1\times$}&0.4471&0.8030&\multirow{2}{*}{35.87M}&\multirow{2}{*}{$1\times$} \\
			&$\pm$0.0001&$\pm$0.0003&&&$\pm$0.0002&$\pm$0.0002&& \\
			\multirow{2}{*}{DiFacto}&0.3823&0.7778&\multirow{2}{*}{10.83M}&$0.82\times$/&0.4470&0.8030&\multirow{2}{*}{19.70M}&$0.63\times$/\\
			&$\pm$0.0003&$\pm$0.0003&&$1.79\times$&$\pm$0.0002&$\pm$0.0004&&$0.80\times$ \\
			\multirow{2}{*}{RaFM}&\textbf{0.3801}&\textbf{0.7826}&\multirow{2}{*}{10.17M}&$0.85\times$/&\textbf{0.4451}&\textbf{0.8060}&\multirow{2}{*}{20.88M}&$0.67\times$/\\
			&\textbf{$\pm$0.0002}&\textbf{$\pm$0.0003}&&$1.20\times$&\textbf{$\pm$0.0001}&\textbf{$\pm$0.0002}&&$0.84\times$ \\
			\hline
			\hline
		\end{tabular}
		\label{tab:result-classification}
				\vspace{-4mm}
	\end{table*}
	
	\subsection{Experiment A: Public Datasets}
	\subsubsection{Experiment Setup}
	We consider 3 datasets for regression tasks and 4 for classification tasks.	All these datasets are randomly split into train (80\%), validation (10\%), and test (10\%) sets. Datasets for regression tasks are the MovieLens 10M (ML 10M), 20M (ML 20M), and the Amazon movie review dataset\footnote{\url{http://jmcauley.ucsd.edu/data/amazon/}} (AMovie), respectively, of which the square loss is used as the performance criterion. Datasets for classification tasks are Frappe\footnote{\url{http://baltrunas.info/research-menu/frappe}}, Movielens Tag (ML Tag), Avazu, and Criteo\footnote{\url{http://labs.criteo.com/2014/02/kaggle-display-advertising-challenge-dataset/}}, respectively, of which the log loss and the area under curve (AUC) are used as the performance criteria.
	
	We use the standard FM \cite{rendle2010factorization} and Difacto \cite{li2016difacto} as baselines for all datasets, and MRMA as a baseline for datasets containing exactly two fields, i.e. ML 10M, ML 20M, and AMovie, respectively. We do not compare with deep models since
	the motivation of this work is to show that FMs with different ranks can be efficiently combined and trained while incurring less computational burden. However, we argue that RaFM is a flexible framework and 
	deep FMs can be embedded. We adopt $L_2$ regularizations for each model, and search the $L_2$ coefficient from $\{1e^{-6},5e^{-6},1e^{-5},\dots,1e^{-1}\}$ on the validation set.
	We search the ranks from $\{32,64,128,256,512\}$ for each model, except for some discussions in Section \ref{section:experiment-complexity}, where we use more values.
	We compare the performance, the model size, the training time, and the test time. Since RaFM has more hyperparameters, to be fair, we tune $k_i$ by the following equation rather than grid search:
	\begin{equation}
	k_i = \arg\min_k \left|\log n_i -\log D_k\right|
	\end{equation}
	where $n_i$ is the number of occurrences of the $i$-th feature. This equation means choosing $k_i$ so that $D_k$ is the closest to $n_i$ in the sense of logarithm.
	Moreover, we use the same $L_2$ coefficient for all FM models in the RaFM, and search it from the same candidate set as baselines.
	
	\subsubsection{Advantages of RaFM over FM with Fixed Ranks}
	Fig. \ref{fig:fm-vs-efm} shows the comparison between FMs with different ranks and RaFM in the ML Tag dataset. The ranks of FMs range from 32 to 512, and the hyperparameters of RaFM are $m=2, D_1=32$ and $D_2=512$. The RaFM-low in Fig. \ref{fig:fm-vs-efm} represents the $\mathcal{B}_{1,1}$, i.e. the FM model with rank 32, but trained simultaneously with $\mathcal{B}_{1,2}$ by Eq. \eqref{eq:efm-train-opt-constraint}. It is shown that RaFM significantly outperforms all FMs. It is because that RaFM maintains 32 factors for most of the features with limited occurrences to avoid overfitting problems, and allocate 512 factors for features with enough occurrences to guarantee expressiveness. Moreover, RaFM-low achieves similar performance to FM with 32 factors, which means embeddings in RaFM has similar expressiveness of embeddings in FM with the same rank.
	
	\subsubsection{Performance and Complexity} \label{section:experiment-complexity}
	Table \ref{tab:complexity-comparison} shows the performance and complexity of RaFMs under different rank settings. We compare 4 sets of ranks, namely, $S_1 = \{32,512\}$, $S_2=\{32,128,512\}$, $S_3=\{32,64,128,256,512\}$, and $S_4=\{32,64,96,128,160,\cdots,480,512\}$. Therefore, ranks in $S_2,S_3$ varies exponentially, while ranks in $S_4$ varies arithmetically. RaFMs with these 4 sets achieves similar performances, all significantly better than FM with 512 factors, which is the best out of all FMs as shown in Fig. \ref{fig:fm-vs-efm}. RaFMs with $S_2,S_3$ have fewer parameters due to the expenentially increasing rank settings. RaFM with $S_4$ has a comparable number of parameters to $S_1$, but the train time is much longer. This comparison shows the impacts of the speed of rank increasing on the computational burden, which is consistent with what is discussed in Section \ref{section:complexity-discussion}.
	
	\subsubsection{Results and Discussion}
	As evidenced by Tables \ref{tab:result-regression} and \ref{tab:result-classification}, our algorithm significantly outperforms the baseline algorithms. For example, the relative improvements on square loss (log loss) criteria are $1\%\sim2\%$ in ML 10M, ML 20M 
	and AMovie, $15\%$ on Frappe, $6\%$ on ML Tag, and 0.5\% on Avazu and Criteo. Empirically, all these improvements are regarded as significant in the researches 
	on corresponding datasets. Taking Criteo as an example, RaFM reduces the logloss of FM by 0.002, while an improvement of $0.001$ in logloss is considered as practically significant \cite{wang2017deep}.
		
	Moreover, the complexity of RaFM is reduced compared to FMs. In fact, RaFM significantly reduces both the model size and the computational time. For example, the model size of RaFM is only 20\%$\sim$66\% of FM, and the training time is 24\%$\sim$95\% of FM.
	
	
	Although Difacto can also reduce the computational burden, it cannot guarantee the performance. The major reason is that Difacto assumes the low-dimensional counterpart of a high-dimensional feature can be obtained by parameter sharing. However, this assumption is not usually reasonable. In contrast, MRMA performs better than FM and DiFacto due to it combines models with different ranks. However, its model size and training time are significantly larger than RaFM, while its performance is a bit worse than that of RaFM due the overfitting problem caused by a large number of parameters.
	
	In summary, RaFM not only achieves a better performance, but also reduces the model size and computational time. Therefore, the proposed RaFM model also has an attractive potential in industrial applications.
	
	\subsection{Experiment B: Industrial Level Click-Through-Rate Dataset}
	\subsubsection{Experiment Setup}
	Here we perform an experiment on the news CTR data provided by Tencent in order to show the potential of RaFM in industrial applications. We use the records of 8 consecutive days, of which the first 7 days are used as the training dataset, the last day is used on the test dataset. The dataset contains 1.7 billion records and 120 millions features.
	
	We compare the proposed RaFM approach with LR, FM and DiFacto, and the performance standard is the AUC, which is consistent with the requirement of online predictions. For FM/DiFacto/RaFM, the dimensionality of factors are chosen from $\{1,2,4,8\}$. We use the FTRL algorithm to guarantee the sparsity of learned weights, and use distributed learning to accelerate the learning process.
	
	\subsubsection{Results}
	We show the comparisons of AUC and the model size in Fig.~\ref{fig:kuaibao}. We use the number of parameters of LR, which is 13.19M in our experiments, as the unit of the model size. The comparison of training time of these approaches is not provided here, because they are very similar (about 6 hours for training, and 30 minutes for testing in our experiments) due to the fact that the training timeis dominated by the communication time of distributed learning rather than the computational time.
	\begin{figure}
		\centering
		\includegraphics[width=\columnwidth]{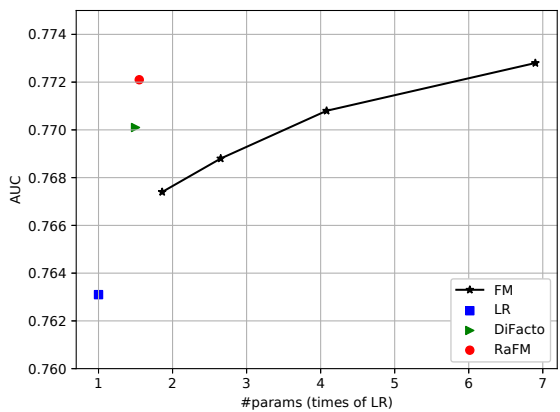}
		\caption{Comparisons of AUC and the model size on industrial level CTR dataset.}
		\label{fig:kuaibao}
	\end{figure}
	
	Since the number of records in the training dataset is sufficiently large, it is natural to hope that the AUC will continuously increase when the model size increases, and the AUCs in Fig.~\ref{fig:kuaibao} can be further improved if we allow a larger rank of factors. However, the model size to achieve such an improvement is the main issue, since a larger model size will undoubtedly bring a heavier burden to the parameter server. According to Fig.~\ref{fig:kuaibao}, RaFM increases the AUC by about 1\% compared to LR, while its model size is only 1.55 times that of LR. In comparison, FM needs 7 times the model size of LR to achieve a similar performance. In other words, the model size of RaFM is only 22\% that of FM to achieve a similar AUC, which means the proposed RaFM approach achieves a good trade-off between model size and performance, and has an attractive potential in industrial applications.

	\section{Conclusion and Future Work} \label{section:conclusion}
	This paper proposes an RaFM model which adopts pairwise interactions from embeddings with different ranks. RaFM can be stored and evaluated as efficiently as, or even more efficiently than FMs with fixed ranks. Moreover, we provide a learning algorithm for efficiently training all embeddings in one concise model, and prove that the training error of each FM is bounded from above.
	Experiments demonstrate that RaFM not only has better performance in regression and classification datasets whose different features have significantly varying frequencies of occurrence, but also reduces the computational burden of FMs.
	
	RaFM is a flexible framework, therefore an interesting direction in future study is to combine it with deep models in order to achieve better performances. Moreover, a more effective hyperparameter tuning approach is also an attractive research direction.
	

	\bibliographystyle{icml2019}
	\bibliography{efm}
\end{document}


	\onecolumn
	\begin{center}
	{\LARGE Supplementary Materials for
	
	``RaFM: Rank-Aware Factorization Machines''}
	\end{center}
	
	\section{Proof of Theorem 6}
		\begin{proof}
		Recall the estimated gradient:
		\begin{equation*}
		\begin{split}
		\widehat{grad} = \frac{\partial L(\mathcal{B}_{1,m},y)}{\partial \left.\bm{v}^{(p)}\right|_{\mathcal{F}_p-\mathcal{F}_{p+1}}} +\frac{1}{N}\sum_{\bm{x'}} \frac{\partial L(\mathcal{B}'_{1,m},y)}{\partial \left.\bm{v}^{(p-1)}\right|_{\mathcal{F}_p}}
		\bm{G}^{-1}
		\frac{\partial^2 L(\mathcal{B}_{1,p-1},\mathcal{B}_{1,p})}{\partial \left.\bm{v}^{(p-1)}\right|_{\mathcal{F}_p} \partial \left.\bm{v}^{(p)}\right|_{\mathcal{F}_p-\mathcal{F}_{p+1}}}
		\end{split}
		\end{equation*}
		According to the chain rule and the fact that $\mathcal{B}_{1,p-1}$ does not contain $\left.\bm{v}^{(p)}\right|_{\mathcal{F}_p-\mathcal{F}_{p+1}}$, we have
		\begin{equation} \label{eq:2nd-derivative}
		\begin{split}
		&\frac{\partial^2 L(\mathcal{B}_{1,p-1},\mathcal{B}_{1,p})}{\partial \left.\bm{v}^{(p-1)}\right|_{\mathcal{F}_p} \partial \left.\bm{v}^{(p)}\right|_{\mathcal{F}_p-\mathcal{F}_{p+1}}} \\
		=& L_{12}(\mathcal{B}_{1,p-1},\mathcal{B}_{1,p}) \left(\frac{\partial \mathcal{B}_{1,p-1}}{\partial \left.\bm{v}^{(p-1)}\right|_{\mathcal{F}_p}}\right)^{\top}\frac{\partial \mathcal{B}_{1,p}}{\partial \left.\bm{v}^{(p)}\right|_{\mathcal{F}_p-\mathcal{F}_{p+1}}} \\
		&+L_{22}(\mathcal{B}_{1,p-1},\mathcal{B}_{1,p}) \left(\frac{\partial \mathcal{B}_{1,p}}{\partial \left.\bm{v}^{(p-1)}\right|_{\mathcal{F}_p}}\right)^{\top}\frac{\partial \mathcal{B}_{1,p}}{\partial \left.\bm{v}^{(p)}\right|_{\mathcal{F}_p-\mathcal{F}_{p+1}}} \\
		&+L_{2}(\mathcal{B}_{1,p-1},\mathcal{B}_{1,p}) \frac{\partial^2 \mathcal{B}_{1,p}}{\partial \left.\bm{v}^{(p-1)}\right|_{\mathcal{F}_p}\partial\left.\bm{v}^{(p)}\right|_{\mathcal{F}_p-\mathcal{F}_{p+1}}}
		\end{split}
		\end{equation}
		where $L_2$ means the partial derivative of $L$ with regard to its first value, while $L_{12}$ and $L_{22}$ are the second-order partial derivatives of $L$. Note that the last line of \eqref{eq:2nd-derivative} equals 0 because that each pairwise interaction in the RaFM will not contain vectors from different FMs. Therefore, 
		\begin{equation*}
		\begin{split}
		\frac{\partial^2 L(\mathcal{B}_{1,p-1},\mathcal{B}_{1,p})}{\partial \left.\bm{v}^{(p-1)}\right|_{\mathcal{F}_p} \partial \left.\bm{v}^{(p)}\right|_{\mathcal{F}_p-\mathcal{F}_{p+1}}} 
		= \bm{H}\frac{\partial \mathcal{B}_{1,p}}{\left.\bm{v}^{(p)}\right|_{\mathcal{F}_p-\mathcal{F}_{p+1}}}
		\end{split}
		\end{equation*}
		where $\bm{H}$ is a $(\left|\mathcal{F}_p\right|D_{p-1})\times 1$ matrix:
		\begin{equation*}
		\bm{H}=L_{12}(\mathcal{B}_{1,p-1},\mathcal{B}_{1,p})\left(\frac{\partial \mathcal{B}_{1,p-1}}{\left.\bm{v}^{(p-1)}\right|_{\mathcal{F}_p}}\right)^{\top} + L_{22}(\mathcal{B}_{1,p-1},\mathcal{B}_{1,p})\left(\frac{\partial \mathcal{B}_{1,p}}{\left.\bm{v}^{(p-1)}\right|_{\mathcal{F}_p}}\right)^{\top}
		\end{equation*}
		Then we have
		\begin{equation*}
		\begin{split}
		\widehat{grad} =& L_1(\mathcal{B}_{1,m},y)\frac{\partial \mathcal{B}_{1,m}}{\partial \left.\bm{v}^{(p)}\right|_{\mathcal{F}_p-\mathcal{F}_{p+1}}} \\
		&+\frac{1}{N}\sum_{\bm{x'}} L_1(\mathcal{B}'_{1,m},y)\frac{\partial \mathcal{B}'_{1,m}}{\partial \left.\bm{v}^{(p-1)}\right|_{\mathcal{F}_p}}
		\bm{G}^{-1}\bm{H}\frac{\partial \mathcal{B}_{1,p}}{\left.\bm{v}^{(p)}\right|_{\mathcal{F}_p-\mathcal{F}_{p+1}}}\\
		=& L_1(\mathcal{B}_{1,m},y)\frac{\partial \mathcal{B}_{1,m}}{\partial \left.\bm{v}^{(p)}\right|_{\mathcal{F}_p-\mathcal{F}_{p+1}}} + \lambda\frac{\partial \mathcal{B}_{1,p}}{\left.\bm{v}^{(p)}\right|_{\mathcal{F}_p-\mathcal{F}_{p+1}}}
		\end{split}
		\end{equation*}
		where
		\begin{equation*}
		\lambda = \frac{1}{N}\sum_{\bm{x'}} L_1(\mathcal{B}'_{1,m},y)\frac{\partial \mathcal{B}'_{1,m}}{\partial \left.\bm{v}^{(p-1)}\right|_{\mathcal{F}_p}}
		\bm{G}^{-1}\bm{H}
		\end{equation*}
		Note that $\lambda$ is the multiplication of a $1\times \left|\mathcal{F}_p\right|D_{p-1}$ matrix, a $\left|\mathcal{F}_p\right|D_{p-1}\times \left|\mathcal{F}_p\right|D_{p-1}$ matrix, and a $\left|\mathcal{F}_p\right|D_{p-1}\times 1$ matrix, and thus is a scalar. Moreover, the derivative of $\mathcal{B}_{1,m}$ and $\mathcal{B}_{1,p}$ with respect to $\left.\bm{v}^{(p)}\right|_{\mathcal{F}_p-\mathcal{F}_{p+1}}$ is the same. Therefore, the direction of $\widehat{grad}$ is parallel to that of $L_1(\mathcal{B}_{1,m},y)\left(\partial \mathcal{B}_{1,m}/\partial \left.\bm{v}^{(p)}\right|_{\mathcal{F}_p-\mathcal{F}_{p+1}}\right)$, i.e. $\partial L(\mathcal{B}_{1,m},y)/\partial \left.\bm{v}^{(p)}\right|_{\mathcal{F}_p-\mathcal{F}_{p+1}}$.
	\end{proof}

	\section{Performance Bound of the Learning Algorithm}
	\begin{theorem} \label{thm:bound}
		Assume there exist two nonnegative functions $d(\cdot)$ and $\Delta(\cdot,\cdot)$ such that $d$ is monotonically increasing, and for all $f_1(\cdot),f_2(\cdot)$ we have
		\begin{equation} \label{eq:triangle}
		\begin{split}
		d\left(\frac{1}{N}\sum_{\bm{x}}L(f_1(\bm{x}),y)\right) &\leq d\left(\frac{1}{N}\sum_{\bm{x}}L(f_2(\bm{x}),y)\right) \\
		&+ d\left(\frac{1}{N}\sum_{\bm{x}}\Delta(f_1(\bm{x}),f_2(\bm{x}))\right)
		\end{split}
		\end{equation}
		then regarding the training error of the $k$-th FM model, i.e. $\mathcal{B}_{k,k}$, we have
		\begin{equation}
		\begin{split}
		d\left(\frac{1}{N}\sum_{\bm{x}}L(\mathcal{B}_{k,k}^*,y)\right)&\leq d\left(\frac{1}{N}\sum_{\bm{x}}L(\mathcal{B}_{1,m}^*,y)\right)\\
		&+ \sum_{p=k}^{m-1}d\left(\frac{1}{N}\sum_{\bm{x}}\Delta(\mathcal{B}_{1,p}^*,\mathcal{B}_{1,p+1}^*)\right) \label{eq:upper-bound}
		\end{split}
		\end{equation}
		where $\mathcal{B}_{1,m}^*$ is the optimal $\mathcal{B}_{1,m}$, and $\mathcal{B}_{1,p}^*$ is defined in the same way.
	\end{theorem}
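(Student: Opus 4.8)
\medskip
\noindent\textbf{Proof sketch (plan).}
The plan is to obtain \eqref{eq:upper-bound} from two independent pieces: a telescoping application of the assumed inequality \eqref{eq:triangle} along the chain of rank-truncated optima $\mathcal{B}_{1,k}^*,\mathcal{B}_{1,k+1}^*,\dots,\mathcal{B}_{1,m}^*$, and one ``base-case'' comparison that replaces $\mathcal{B}_{1,k}^*$ by $\mathcal{B}_{k,k}^*$ at no cost. To keep the notation light, write $\bar{L}(f)=\frac{1}{N}\sum_{\bm{x}}L(f(\bm{x}),y)$ and $\bar{\Delta}(f,g)=\frac{1}{N}\sum_{\bm{x}}\Delta(f(\bm{x}),g(\bm{x}))$, so that \eqref{eq:triangle} reads $d(\bar{L}(f_1))\le d(\bar{L}(f_2))+d(\bar{\Delta}(f_1,f_2))$ for all $f_1,f_2$.

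First I would carry out the telescoping. For every $p\in\{k,k+1,\dots,m-1\}$, instantiate \eqref{eq:triangle} with $f_1=\mathcal{B}_{1,p}^*$ and $f_2=\mathcal{B}_{1,p+1}^*$, which is legitimate because the hypothesis is assumed for all pairs of functions, in particular for these data-dependent optima. This gives
\begin{equation*}
d\!\left(\bar{L}(\mathcal{B}_{1,p}^*)\right)\;\le\;d\!\left(\bar{L}(\mathcal{B}_{1,p+1}^*)\right)+d\!\left(\bar{\Delta}(\mathcal{B}_{1,p}^*,\mathcal{B}_{1,p+1}^*)\right).
\end{equation*}
Substituting the bound for $d(\bar{L}(\mathcal{B}_{1,p+1}^*))$ into the one for $d(\bar{L}(\mathcal{B}_{1,p}^*))$ successively, from $p=k$ up to $p=m-1$, the loss terms telescope and leave
\begin{equation*}
d\!\left(\bar{L}(\mathcal{B}_{1,k}^*)\right)\;\le\;d\!\left(\bar{L}(\mathcal{B}_{1,m}^*)\right)+\sum_{p=k}^{m-1}d\!\left(\bar{\Delta}(\mathcal{B}_{1,p}^*,\mathcal{B}_{1,p+1}^*)\right),
\end{equation*}
which is precisely the right-hand side of \eqref{eq:upper-bound}. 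Note that no monotonicity of $p\mapsto\bar{L}(\mathcal{B}_{1,p}^*)$ is needed; the chain holds whether refining the rank helps or hurts the training-optimal loss.

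It then remains to show the base case $d(\bar{L}(\mathcal{B}_{k,k}^*))\le d(\bar{L}(\mathcal{B}_{1,k}^*))$, after which chaining with the displayed inequality yields \eqref{eq:upper-bound}. Since $d$ is monotonically increasing, this reduces to the purely model-theoretic claim $\bar{L}(\mathcal{B}_{k,k}^*)\le\bar{L}(\mathcal{B}_{1,k}^*)$: the optimal \emph{training} error of the stand-alone $k$-th FM is no larger than that of the rank-truncated model $\mathcal{B}_{1,k}$. I would prove this from optimality together with an over-parameterization observation: by the RaFM construction, and the fact used in the proof of Theorem~6 that no pairwise interaction mixes latent vectors from different FMs, every predictor expressible by $\mathcal{B}_{1,k}$ is also expressible by the $k$-th FM — concretely, take the $k$-th FM's latent matrix to be the (zero-padded, concatenated) collection of the level-$1,\dots,k$ latent matrices that $\mathcal{B}_{1,k}$ uses, which by linearity of the inner product reproduces the same predictions on every input. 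Hence the hypothesis class of $\mathcal{B}_{1,k}$ is contained in that of $\mathcal{B}_{k,k}$, so minimizing $\bar{L}$ over the latter returns a value no larger than over the former, giving $\bar{L}(\mathcal{B}_{k,k}^*)\le\bar{L}(\mathcal{B}_{1,k}^*)$ and therefore $d(\bar{L}(\mathcal{B}_{k,k}^*))\le d(\bar{L}(\mathcal{B}_{1,k}^*))$.

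The only delicate point is this base case: one must verify, against the precise parameterization of RaFM, that a single $k$-th FM reproduces $\mathcal{B}_{1,k}$ \emph{exactly} (not merely approximately), so that the two optimal training losses can be compared with no residual $\bar{\Delta}$-type term. Everything else — the telescoping and the final chaining — is routine given \eqref{eq:triangle} and the monotonicity of $d$.
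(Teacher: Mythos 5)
Your proof has the same two-part architecture as the paper's: a telescoping application of \eqref{eq:triangle} along the chain $\mathcal{B}_{1,k}^*,\mathcal{B}_{1,k+1}^*,\dots,\mathcal{B}_{1,m}^*$, plus a base case $d\left(\frac{1}{N}\sum_{\bm{x}}L(\mathcal{B}_{k,k}^*,y)\right)\leq d\left(\frac{1}{N}\sum_{\bm{x}}L(\mathcal{B}_{1,k}^*,y)\right)$. Your telescoping half is exactly the paper's, and your side remarks (that \eqref{eq:triangle} may be instantiated at the data-dependent optima, that no monotonicity of $p\mapsto\frac{1}{N}\sum_{\bm{x}}L(\mathcal{B}_{1,p}^*,y)$ is needed, and that monotonicity of $d$ transfers the loss comparison) all match.

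The divergence is in the base case, and it is the one place where your argument as written does not close. The paper proves it by a dedicated lemma applied repeatedly: for each $l$, the predictor $\mathcal{B}_{l-1,k}$ is realized inside $\mathcal{B}_{l,k}$ by the \emph{one-level} substitution $\bm{v}_i^{(l)}=\left[\bm{0}_{D_l-D_{l-1}};\,\bm{v}_i^{(l-1)}\right]$ for $i\in\mathcal{F}_l$ (checked against constraint (12) of the main paper), so the optimal training loss can only decrease from $\mathcal{B}_{l-1,k}^*$ to $\mathcal{B}_{l,k}^*$; chaining $l=2,\dots,k$ gives \eqref{eq:model-capacity} and hence the base case, with the optimum re-taken at every intermediate model. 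Your one-shot witness --- a single latent matrix obtained by zero-padding and concatenating the level-$1,\dots,k$ matrices --- does not reproduce $\mathcal{B}_{1,k}$ exactly: a single FM assigns \emph{one} vector per feature, and its inner product either sums contributions over all levels the pair shares, giving $\sum_{p}\langle\bm{v}_i^{(p)},\bm{v}_j^{(p)}\rangle$, or (depending on how you pad) pairs $\bm{v}_i^{(p_i)}$ with the tail of $\bm{v}_j^{(p_j)}$ rather than with the independent parameter $\bm{v}_j^{(p_i)}$; RaFM's interaction for that pair uses only the single level $\min(p_i,p_j)$, and a high-level feature interacts through several independent vectors depending on its partner's level. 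So the containment of optimal losses is true, but your explicit embedding is not a valid witness for it. You correctly flagged this as the delicate point, but it is not a formality: it is precisely the step that forces the paper's level-by-level lemma in place of a direct hypothesis-class inclusion argument.
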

	
	\textbf{Remark}:
	In practice, $\Delta$ represents the error of expressing $f_2$ by $f_1$. Eq. \eqref{eq:triangle} is an extension to the triangle inequality, and can be applied to both regression tasks and classification tasks. In regression tasks, $L$ is the square loss, then we can set $d$ as the square root function and $\Delta=L$. In classification tasks, $L$ is the logarithm loss, then we can let $d$ be an identity function, and define $\Delta$ as
	\begin{equation}
	\Delta(f_1(\bm{x}),f_2(\bm{x})) = C_{\theta,\delta}\mathcal{D}_{KL}\left[f_1(\bm{x})\|f_2(\bm{x})\right] + \log \delta
	\end{equation}
	where $\delta>1$, $C_{\theta,\delta} = \frac{\log \delta}{\theta\log\delta + (1-\theta)\log\frac{1-\theta}{1-\theta/\delta}}$, and $\theta=\min_{\bm{x}}\left[yf_2(\bm{x})+ (1-y)(1-f_2(\bm{x}))\right]$. $\mathcal{D}_{KL}$ is the KL divergence of two bimonial variables. The readers can refer to Section 3 in the supplementary material for the proof of Eq.  \eqref{eq:triangle} for logarithm loss.
	
	In order to prove Theorem \ref{thm:bound}, we first provide the following lemma:
	\begin{lemma}
		The following inequalities hold
		\begin{equation} \label{eq:model-capacity}
	  d\left(\frac{1}{N}\sum_{\bm{x}}l(\mathcal{B}_{l,k}^*,y)\right) \leq d\left(\frac{1}{N}\sum_{\bm{x}}L(\mathcal{B}_{l-1,k}^*,y)\right)
		\end{equation}
	\end{lemma}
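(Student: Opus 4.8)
Since $d$ is monotonically increasing, to obtain \eqref{eq:model-capacity} it is enough to prove the inequality that results from dropping $d$ from both sides,
\[
\frac{1}{N}\sum_{\bm{x}} L(\mathcal{B}_{l,k}^*,y)\ \le\ \frac{1}{N}\sum_{\bm{x}} L(\mathcal{B}_{l-1,k}^*,y),
\]
and, since $\mathcal{B}_{l,k}^*$ and $\mathcal{B}_{l-1,k}^*$ are the minimizers of the training loss over the function classes realizable by $\mathcal{B}_{l,k}$ and $\mathcal{B}_{l-1,k}$ respectively, it is in turn enough to show that the latter class is contained in the former. So the whole plan is to reduce the lemma to the purely structural claim that every prediction function representable by $\mathcal{B}_{l-1,k}$ is also representable by $\mathcal{B}_{l,k}$; once that containment is in hand, taking minima over the nested classes and applying $d$ finishes the proof.

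For the structural claim I would argue directly from the RaFM parametrization. The model $\mathcal{B}_{a,k}$ is built from component FMs of ranks $D_a < D_{a+1} < \dots < D_k$; each feature is routed to one component according to its tier, each pairwise interaction is evaluated inside a single component (each interaction involves vectors from only one component FM, the fact used in the proof of Theorem 6), and the rank at which a given feature or interaction is handled is nondecreasing in $a$. Raising $a$ from $l-1$ to $l$ merely deletes the lowest-rank component of rank $D_{l-1}$ and reassigns everything it carried to the next component, of the strictly larger rank $D_l$, leaving every other assignment unchanged. Hence, given any parameters of $\mathcal{B}_{l-1,k}$ I would exhibit parameters of $\mathcal{B}_{l,k}$ realizing the identical prediction function: keep the bias and all linear weights; keep every embedding vector that already lives at rank $\ge D_l$; and for every feature (or auxiliary low-rank ``shadow'') vector that $\mathcal{B}_{l-1,k}$ stored at rank $D_{l-1}$, place that vector into the wider $D_l$-dimensional slot of $\mathcal{B}_{l,k}$ and pad the remaining coordinates with zeros. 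Because inner products are invariant under zero-padding, every linear term and every interaction term is preserved, so the two models agree on all training inputs, which gives the desired containment of function classes.

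The main obstacle I anticipate is not a hard inequality but the bookkeeping needed to make this last step airtight: one must check, from the definition of RaFM, that promoting the bottom tier never relocates any interaction to a component of strictly lower rank, so that the single zero-padding construction is simultaneously valid for all linear terms, all interaction terms, and all shadow embeddings carried by the intermediate components for higher-tier features. Once this monotonicity of the tier-to-rank assignment is confirmed, the lemma is exactly the model-capacity statement recorded in its label: $\mathcal{B}_{l,k}$ grants at least as many embedding dimensions to every feature as $\mathcal{B}_{l-1,k}$ does, hence fits the training data at least as well, hence---after applying the monotone $d$---satisfies \eqref{eq:model-capacity}.
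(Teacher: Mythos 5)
Your proposal is correct and follows essentially the same route as the paper: the paper likewise observes that setting $\bm{v}_i^{(l)}$ to the zero-padded copy of $\bm{v}_i^{(l-1)}$ for all $i\in\mathcal{F}_l$ makes $\mathcal{B}_{l,k}$ reproduce $\mathcal{B}_{l-1,k}$ exactly (while still satisfying the model's constraint), so $\mathcal{B}_{l-1,k}$ is a submodel of $\mathcal{B}_{l,k}$ and the optimal training errors are nested, with monotonicity of $d$ finishing the argument.
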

	\begin{proof}
	Note that we have $\mathcal{B}_{l,k}=\mathcal{B}_{l-1,k}$ provided that
	\begin{equation*}
	\bm{v}_i^{(l)} = 
	\begin{bmatrix}
	\bm{0}_{D_{l}-D_{l-1}} \\
	\bm{v}_i^{(l-1)}
	\end{bmatrix}
	, \forall i \in \mathcal{F}_{l}
	\end{equation*}
	And such solution also satisfies the constraint (12) in the main body of the paper. Therefore $\mathcal{B}_{l-1,k}$ is a submodel of $\mathcal{B}_{l,k}$, and thus the optimal training error of $\mathcal{B}_{l,k}$ is smaller than $\mathcal{B}_{l-1,k}$, and \eqref{eq:model-capacity} follows.
	\end{proof}

	\begin{proof}[Proof of Theorem \ref{thm:bound}]
		According to \eqref{eq:model-capacity} we have
		\begin{equation*}
		\begin{split}
		d\left(\frac{1}{N}\sum_{\bm{x}}L(\mathcal{B}_{k,k}^*,y)\right)&\leq d\left(\frac{1}{N}\sum_{\bm{x}}L(\mathcal{B}_{k-1,k}^*,y)\right)\leq d\left(\frac{1}{N}\sum_{\bm{x}}L(\mathcal{B}_{k-2,k}^*,y)\right)\\
		&\dots\\
		&\leq d\left(\frac{1}{N}\sum_{\bm{x}}L(\mathcal{B}_{1,k}^*,y)\right)
		\end{split}
		\end{equation*}
		Moreover, according to \eqref{eq:triangle} we have
		\begin{equation*}
		\begin{split}
		d\left(\frac{1}{N}\sum_{\bm{x}}L(\mathcal{B}_{1,k}^*,y)\right)&\leq d\left(\frac{1}{N}\sum_{\bm{x}}L(\mathcal{B}_{1,k+1}^*,y)\right) + d\left(\frac{1}{N}\sum_{\bm{x}}\Delta(\mathcal{B}_{1,k}^*,\mathcal{B}_{1,k+1}^*)\right)\\
		&\leq d\left(\frac{1}{N}\sum_{\bm{x}}L(\mathcal{B}_{1,k+2}^*,y)\right) + \sum_{p=k}^{k+1}d\left(\frac{1}{N}\sum_{\bm{x}}\Delta(\mathcal{B}_{1,p}^*,\mathcal{B}_{1,p+1}^*)\right)\\
		&\dots\\
		&\leq d\left(\frac{1}{N}\sum_{\bm{x}}L(\mathcal{B}_{1,m}^*,y)\right) + \sum_{p=k}^{m-1}d\left(\frac{1}{N}\sum_{\bm{x}}\Delta(\mathcal{B}_{1,p}^*,\mathcal{B}_{1,p+1}^*)\right)
		\end{split}
		\end{equation*}
		
		Therefore we have
		\begin{equation*}
		d\left(\frac{1}{N}\sum_{\bm{x}}L(\mathcal{B}_{k,k}^*,y)\right)\leq d\left(\frac{1}{N}\sum_{\bm{x}}L(\mathcal{B}_{1,m}^*,y)\right) + \sum_{p=k}^{m-1}d\left(\frac{1}{N}\sum_{\bm{x}}\Delta(\mathcal{B}_{1,p}^*,\mathcal{B}_{1,p+1}^*)\right)
		\end{equation*}
		
	\end{proof}

	\section{Quasi-Triangle Inequality for Logarithmic Loss}
	The following proposition is an extension of the triangle inequality for log loss.
	\begin{proposition} \label{prop:triangle-logloss}
		Suppose $y\in\{0,1\}, 0< \hat{y}_1,\hat{y}_2< 1$, and define the log loss function $L(\hat{y}_i,y)$ and the KL divergence $\mathcal{D}_{KL}(\hat{y}_1\|\hat{y}_2)$ as
		\begin{equation*}
		\begin{split}
		L(\hat{y}_i,y)&=-y\log\hat{y}_i-(1-y)\log(1-\hat{y}_i) \\
		\mathcal{D}_{KL}(\hat{y}_1\|\hat{y}_2)&=\hat{y}_1\log\frac{\hat{y}_1}{\hat{y}_2}+(1-\hat{y}_1)\log\frac{1-\hat{y}_1}{1-\hat{y}_2}
		\end{split}
		\end{equation*}
		then $\forall \delta > 1,0<\theta\leq y\hat{y}_1 + (1-y)(1-\hat{y}_1)$, we have
		\begin{equation}
		L(\hat{y}_2,y)\leq L(\hat{y}_1,y)+C_{\theta,\delta}\mathcal{D}_{KL}(\hat{y}_1\|\hat{y}_2)+\log\delta
		\end{equation}
		where
		\begin{equation}
		C_{\theta,\delta} = \frac{\log \delta}{\theta\log\delta + (1-\theta)\log\frac{1-\theta}{1-\theta/\delta}}
		\end{equation}
	\end{proposition}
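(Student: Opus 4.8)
The plan is to reduce to the single case $y=1$ by symmetry, then solve in closed form the one‑variable maximization over $\hat{y}_2$ for a fixed $\hat{y}_1$, and finally maximize the result over $\hat{y}_1\ge\theta$. Both $L(\cdot,y)$ and $\mathcal{D}_{KL}(\cdot\|\cdot)$ are invariant under the joint substitution $(\hat{y}_1,\hat{y}_2,y)\mapsto(1-\hat{y}_1,1-\hat{y}_2,1-y)$, and the hypothesis $\theta\le y\hat{y}_1+(1-y)(1-\hat{y}_1)$ transforms accordingly, so it suffices to prove the claim for $y=1$. Writing $a=\hat{y}_1\in[\theta,1)$, $b=\hat{y}_2\in(0,1)$ and $C=C_{\theta,\delta}$, the claim becomes $\log(a/b)-C\,\mathcal{D}_{KL}(a\|b)\le\log\delta$. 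I would first note that the denominator of $C_{\theta,\delta}$ is exactly $\mathcal{D}_{KL}(\theta\|\theta/\delta)$, which is strictly positive and is $<\theta\log\delta$ because $\log\frac{1-\theta}{1-\theta/\delta}<0$ for $\delta>1$; hence $C\theta>1$, and a fortiori $C>1$. These facts keep the stationary points found below strictly inside $(0,1)$.

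Next, for fixed $a$ the function $\Phi(b):=\log(a/b)-C\,\mathcal{D}_{KL}(a\|b)=(Ca-1)\log\frac{b}{a}+C(1-a)\log\frac{1-b}{1-a}$ has $\Phi''(b)=\frac{1-Ca}{b^{2}}-\frac{C(1-a)}{(1-b)^{2}}<0$ (using $Ca\ge C\theta>1$), so it is strictly concave, tends to $-\infty$ at both endpoints of $(0,1)$, and is maximized at the unique stationary point $b^{*}=\frac{Ca-1}{C-1}\in(0,1)$. Substituting yields $\max_{b}\Phi(b)=\phi(a):=(Ca-1)\log\frac{Ca-1}{a}+C(1-a)\log C-(C-1)\log(C-1)$, and a short computation gives $\phi'(a)=C\bigl[\log(1-\tfrac{1}{Ca})+\tfrac{1}{Ca}\bigr]<0$ by the elementary bound $\log(1-u)<-u$ for $0<u<1$. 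Hence $\phi$ is strictly decreasing on $[\theta,1)$, and the whole proposition reduces to the single inequality $\phi(\theta)\le\log\delta$.

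The crux is to re-express $\phi(\theta)$ as a KL divergence. With $a=\theta$ the stationary point is $b^{*}=\frac{C\theta-1}{C-1}$, so that $C\theta-1=b^{*}(C-1)$ and $C(1-\theta)=(1-b^{*})(C-1)$; inserting these into $\phi(\theta)=(C\theta-1)\log\frac{b^{*}}{\theta}+C(1-\theta)\log\frac{1-b^{*}}{1-\theta}$ collapses it to $\phi(\theta)=(C-1)\,\mathcal{D}_{KL}(b^{*}\|\theta)$. Moreover, the defining relation $C\,\mathcal{D}_{KL}(\theta\|\theta/\delta)=\log\delta$ is, after cancelling the common entropy term of $b^{*}$ and using $\frac{b^{*}}{1-b^{*}}=\frac{C\theta-1}{C(1-\theta)}$, algebraically equivalent to $\mathcal{D}_{KL}(b^{*}\|\theta)=\mathcal{D}_{KL}(b^{*}\|\theta/\delta)$. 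Since $q\mapsto\mathcal{D}_{KL}(b^{*}\|q)$ is strictly unimodal with minimum $0$ at $q=b^{*}$ and takes equal values at the distinct points $\theta/\delta<\theta$, these points straddle $b^{*}$: $\theta/\delta<b^{*}<\theta$. Finally, $x\mapsto\mathcal{D}_{KL}(x\|\theta/\delta)$ is strictly increasing for $x>\theta/\delta$, so $\mathcal{D}_{KL}(b^{*}\|\theta/\delta)<\mathcal{D}_{KL}(\theta\|\theta/\delta)$; combining this with $0<C-1<C$ and $\mathcal{D}_{KL}\ge0$,
\[
\phi(\theta)=(C-1)\,\mathcal{D}_{KL}(b^{*}\|\theta/\delta)<C\,\mathcal{D}_{KL}(\theta\|\theta/\delta)=\log\delta,
\]
which completes the proof, in fact with strict inequality.

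I expect the main obstacle to be discovering the reformulation $\phi(\theta)=(C-1)\,\mathcal{D}_{KL}(b^{*}\|\theta)$ together with the observation that the exact form of $C_{\theta,\delta}$ is precisely what forces $\mathcal{D}_{KL}(b^{*}\|\theta)=\mathcal{D}_{KL}(b^{*}\|\theta/\delta)$; once these identities are found the conclusion follows from nothing beyond monotonicity of the binary KL divergence in each argument. A secondary point needing care is verifying that the maxima over $b$ and over $a$ are attained where claimed, in particular that $Ca>1$ throughout the feasible range — which is exactly why the hypothesis involves $\theta$ and why one first establishes $C\theta>1$.
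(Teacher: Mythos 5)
Your proof is correct, but it takes a genuinely different route from the paper's. The paper reduces to $y=1$ exactly as you do, but then proves the key bound $\log(\hat{y}_1/\hat{y}_2) \leq C_{\hat{y}_1,\delta}\mathcal{D}_{KL}(\hat{y}_1\|\hat{y}_2)+\log\delta$ by a case split at $\hat{y}_2=\hat{y}_1/\delta$: the case $\hat{y}_1<\delta\hat{y}_2$ is trivial from nonnegativity of the KL divergence, and otherwise the ratio $\mathcal{D}_{KL}(\hat{y}_1\|\hat{y}_2)/\log(\hat{y}_1/\hat{y}_2)$ is bounded below by $1/C_{\hat{y}_1,\delta}$ using monotonicity of $h(\hat{y}_1,\hat{y}_2)=\log\frac{1-\hat{y}_1}{1-\hat{y}_2}/\log\frac{\hat{y}_1}{\hat{y}_2}$ in $\hat{y}_2$; a separate lemma that $\theta\mapsto C_{\theta,\delta}$ is positive and decreasing then replaces $C_{\hat{y}_1,\delta}$ by $C_{\theta,\delta}$. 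You instead carry out an exact two-stage maximization: concavity of $\Phi$ in $b$ (valid because $Ca\geq C\theta>1$, which you correctly extract from the identity that the denominator of $C_{\theta,\delta}$ is $\mathcal{D}_{KL}(\theta\|\theta/\delta)$), the closed-form maximizer $b^*=(Ca-1)/(C-1)$, monotone decrease of $\phi(a)$ via $\log(1-u)<-u$, and finally the reformulation $\phi(\theta)=(C-1)\mathcal{D}_{KL}(b^*\|\theta)=(C-1)\mathcal{D}_{KL}(b^*\|\theta/\delta)$, all of which I have checked. Your approach is more computational but buys more: it identifies the exact worst-case $\hat{y}_2$, shows the inequality is in fact strict, and replaces the paper's two auxiliary monotonicity lemmas with a single self-contained optimization plus elementary properties of the binary KL divergence; the paper's argument is shorter and avoids having to locate stationary points, at the cost of a case analysis and a looser view of where the bound is attained.
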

	
	Before proving Proposition \ref{prop:triangle-logloss}, we first provide some lemmas.
	\begin{lemma} \label{lemma:monotonicity}
		$\forall \theta>0,\delta>1$, we have $C_{\theta,\delta}>0$, and $C_{\theta,\delta}$ monotonically decreases when $\theta$ increases.
	\end{lemma}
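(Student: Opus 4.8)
The plan is to analyze the numerator and denominator of $C_{\theta,\delta}$ separately. Write $C_{\theta,\delta}=\dfrac{\log\delta}{g(\theta)}$ with $g(\theta):=\theta\log\delta+(1-\theta)\log\dfrac{1-\theta}{1-\theta/\delta}$; I work on the domain $\theta\in(0,1)$, which is the only regime that occurs in Proposition~\ref{prop:triangle-logloss} and is where the expression is well-defined. Since $\delta>1$, the numerator $\log\delta$ is a fixed positive constant, so $C_{\theta,\delta}>0$ is equivalent to $g(\theta)>0$, and $C_{\theta,\delta}$ decreasing in $\theta$ is equivalent to $g$ increasing in $\theta$.

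For positivity I would rewrite $\theta\log\delta=\theta\log\dfrac{\theta}{\theta/\delta}$, so that $g(\theta)=\theta\log\dfrac{\theta}{\theta/\delta}+(1-\theta)\log\dfrac{1-\theta}{1-\theta/\delta}=\mathcal{D}_{KL}(\theta\,\|\,\theta/\delta)$ in the Bernoulli KL-divergence notation of Proposition~\ref{prop:triangle-logloss}. Because $\theta>0$ and $\delta>1$ give $0<\theta/\delta<\theta<1$, the two Bernoulli parameters are distinct, so $g(\theta)=\mathcal{D}_{KL}(\theta\,\|\,\theta/\delta)>0$ by the nonnegativity of KL divergence (with equality only for identical distributions). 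Hence $C_{\theta,\delta}=\log\delta/g(\theta)>0$.

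For the monotonicity it remains to show $g'(\theta)>0$. Differentiating $g$ directly, the terms $(1-\theta)\log(1-\theta)$ and $-(1-\theta)\log(1-\theta/\delta)$ each contribute a logarithmic and a rational piece; after cancellation, and using the identity $\log\delta+\log(1-\theta/\delta)=\log(\delta-\theta)$, one obtains the compact form $g'(\theta)=\log\dfrac{\delta-\theta}{1-\theta}-1+\dfrac{1-\theta}{\delta-\theta}$. Setting $t:=\dfrac{\delta-\theta}{1-\theta}$, which satisfies $t>1$ since $\delta-\theta-(1-\theta)=\delta-1>0$ and $1-\theta>0$, this reads $g'(\theta)=\phi(t)$ with $\phi(t):=\log t+\dfrac1t-1$. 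Finally $\phi(1)=0$ and $\phi'(t)=\dfrac1t-\dfrac1{t^2}=\dfrac{t-1}{t^2}>0$ for $t>1$, so $\phi(t)>0$, i.e.\ $g'(\theta)>0$ on $(0,1)$. Thus $g$ is strictly increasing and $C_{\theta,\delta}=\log\delta/g(\theta)$ strictly decreasing in $\theta$.

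The only mildly delicate step is the derivative bookkeeping: checking that the rational pieces combine to exactly $\dfrac{1-\theta}{\delta-\theta}$ and that the substitution $t=\dfrac{\delta-\theta}{1-\theta}$ reduces everything to the one-variable inequality $\log t+1/t-1>0$ for $t>1$. The KL-divergence reformulation is not essential but yields the positivity claim almost for free; the rest is routine single-variable calculus.
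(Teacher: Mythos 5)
Your proof is correct, and its core — differentiating the denominator $g(\theta)=\theta\log\delta+(1-\theta)\log\frac{1-\theta}{1-\theta/\delta}$ and reducing $g'(\theta)>0$ to the elementary inequality $\log t + \frac{1}{t}-1>0$ for $t\neq 1$ — is the same computation the paper does (the paper writes it as $-\log x + (x-1)>0$ with $x=\frac{1}{\delta}\cdot\frac{1-\theta}{1-\theta/\delta}=1/t$, i.e.\ the identical inequality after the substitution you make explicit). Where you genuinely diverge is the positivity claim: the paper deduces $g(\theta)>0$ from the monotonicity it has just established together with $g(0)=0$, so positivity there is a corollary of the derivative computation; you instead observe that $g(\theta)=\mathcal{D}_{KL}(\theta\,\|\,\theta/\delta)$ for Bernoulli parameters $\theta\neq\theta/\delta$ and invoke nonnegativity of the KL divergence. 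Your route makes positivity independent of the calculus step and is arguably more illuminating (it also explains why $g$ vanishes only as $\theta\to 0$), at the cost of an extra identity; the paper's route is shorter since it reuses work already done. Your explicit restriction to $\theta\in(0,1)$ is also a point in your favor: the lemma as stated says ``$\forall\theta>0$,'' but the expression is only well-defined for $\theta<1$ (or $\theta\le 1$ in the limit), and the paper glosses over this while implicitly assuming the same range.
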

	\begin{proof}
		Consider $g(\theta)=1/C_{\theta,\delta}=\theta + \left[(1-\theta)\log\frac{1-\theta}{1-\theta/\delta}/\log\delta\right]$, then we have
		\begin{equation*}
		\begin{split}
		g'(\theta)\log\delta&=\log\delta - \log\frac{1-\theta}{1-\theta/\delta} - 1 + \frac{1}{\delta}\frac{1-\theta}{1-\theta/\delta} \\
		&=- \log\left(\frac{1}{\delta}\frac{1-\theta}{1-\theta/\delta}\right) +\left( \frac{1}{\delta}\frac{1-\theta}{1-\theta/\delta}-1\right) > 0
		\end{split}
		\end{equation*}
		here we use the fact that $\log x< (x-1)$ unless $x=1$. Due to that $\log\delta>0$, we have $g'(\theta)\geq 0$, thus $g(\theta)$ is an increasing function. Moreover we have
		\begin{equation*}
		g(\theta)> g(0) = 0
		\end{equation*}
		Therefore, $C_{\theta,\delta}=1/g(\theta)$ is a decreasing function  with respect to $\theta$, and $C_{\theta,\delta}>0$.
	\end{proof}
	\begin{lemma} \label{lemma:simplified}
		For $0<\hat{y}_1,\hat{y}_2<1$, we have
		\begin{equation} \label{eq:triangle-simplified}
		\log \frac{\hat{y}_1}{\hat{y}_2} \leq C_{\hat{y}_1,\delta}\mathcal{D}_{KL}(\hat{y}_1\|\hat{y}_2) + \log \delta
		\end{equation}
	\end{lemma}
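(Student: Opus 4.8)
The plan is to exploit a hidden identity for the constant $C_{\hat{y}_1,\delta}$ that collapses the two–variable bound \eqref{eq:triangle-simplified} into a one–dimensional convexity statement. First I would observe that the denominator of $C_{\theta,\delta}$ is itself a binomial KL divergence: since $\theta\log\delta=\theta\log\frac{\theta}{\theta/\delta}$,
\[
\theta\log\delta+(1-\theta)\log\frac{1-\theta}{1-\theta/\delta}
=\theta\log\frac{\theta}{\theta/\delta}+(1-\theta)\log\frac{1-\theta}{1-\theta/\delta}
=\mathcal{D}_{KL}(\theta\,\|\,\theta/\delta),
\]
which is strictly positive because $\delta>1$ forces $\theta/\delta\neq\theta$. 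Hence $C_{\hat{y}_1,\delta}=\log\delta\,/\,\mathcal{D}_{KL}(\hat{y}_1\,\|\,\hat{y}_1/\delta)$; in particular $C_{\theta,\delta}>0$, recovering part of Lemma \ref{lemma:monotonicity}.

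Next I would fix $\theta=\hat{y}_1$, put $s=\log\delta>0$, and for $t>\log\theta$ define
\[
\Phi(t)=\mathcal{D}_{KL}(\theta\,\|\,\theta e^{-t})=\theta t+(1-\theta)\log\frac{1-\theta}{1-\theta e^{-t}},
\]
which is well defined there since $\theta e^{-t}\in(0,1)$. Writing $t=\log(\hat{y}_1/\hat{y}_2)$, so that $\hat{y}_2=\theta e^{-t}$, we get $\mathcal{D}_{KL}(\hat{y}_1\,\|\,\hat{y}_2)=\Phi(t)$ and $\mathcal{D}_{KL}(\hat{y}_1\,\|\,\hat{y}_1/\delta)=\Phi(s)$, so \eqref{eq:triangle-simplified} reads $t\le\frac{s}{\Phi(s)}\Phi(t)+s$. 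Rearranging and multiplying by $\Phi(s)/s>0$, this is equivalent to the single clean inequality $\Phi(t)\ge\frac{t-s}{s}\Phi(s)$.

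To finish I would record the elementary properties of $\Phi$: a direct computation gives $\Phi'(t)=\theta(1-e^{-t})/(1-\theta e^{-t})$ and then $\Phi''(t)=\theta(1-\theta)e^{-t}/(1-\theta e^{-t})^2>0$, so $\Phi$ is strictly convex on $(\log\theta,\infty)$, with $\Phi(0)=\mathcal{D}_{KL}(\theta\,\|\,\theta)=0$ and $\Phi\ge 0$. Now split into cases. If $t\le s$, the right side $\frac{t-s}{s}\Phi(s)$ is nonpositive while $\Phi(t)\ge 0$, so the inequality holds. If $t>s$, then $0<s<t$ all lie in the domain (as $\theta<1$ gives $\log\theta<0$), and convexity of $\Phi$ across $0<s<t$ gives $\frac{\Phi(s)}{s}=\frac{\Phi(s)-\Phi(0)}{s-0}\le\frac{\Phi(t)-\Phi(0)}{t-0}=\frac{\Phi(t)}{t}$, hence $\Phi(t)\ge\frac{t}{s}\Phi(s)\ge\frac{t-s}{s}\Phi(s)$. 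Unwinding the substitution recovers \eqref{eq:triangle-simplified}.

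The only genuinely nontrivial move is the first one: recognizing that the awkward denominator of $C_{\theta,\delta}$ is exactly $\mathcal{D}_{KL}(\theta\,\|\,\theta/\delta)$. Once that is in hand, the argument reduces to the textbook fact that a convex function vanishing at the origin has nondecreasing secant slope from the origin, together with the routine second-derivative computation for $\Phi$. A more computational route — directly minimizing $\hat{y}_2\mapsto C_{\hat{y}_1,\delta}\mathcal{D}_{KL}(\hat{y}_1\,\|\,\hat{y}_2)-\log(\hat{y}_1/\hat{y}_2)$ over $(0,1)$, whose minimizer is $\hat{y}_2^*=(C_{\hat{y}_1,\delta}\hat{y}_1-1)/(C_{\hat{y}_1,\delta}-1)$ — also works, but it leaves a messy transcendental inequality in $\hat{y}_1$ and $\delta$ to verify, so the reformulation is what keeps the proof short.
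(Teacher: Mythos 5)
Your proof is correct. Its skeleton coincides with the paper's: both split at $\hat{y}_1=\delta\hat{y}_2$ (your $t\le s$ versus $t>s$), dispose of the near case using nonnegativity of the KL divergence plus the $\log\delta$ slack, and in the far case reduce everything to the monotonicity of the ratio $\mathcal{D}_{KL}(\hat{y}_1\|\hat{y}_2)/\log(\hat{y}_1/\hat{y}_2)$ as $\hat{y}_2$ decreases below $\hat{y}_1/\delta$, which is exactly your statement $\Phi(t)/t\ge\Phi(s)/s$ for $t\ge s$. Where you genuinely diverge is in how that monotonicity is obtained. The paper works directly in the $\hat{y}_2$ variable: it writes the ratio as $\hat{y}_1+(1-\hat{y}_1)h(\hat{y}_1,\hat{y}_2)$ with $h=\log\frac{1-\hat{y}_1}{1-\hat{y}_2}/\log\frac{\hat{y}_1}{\hat{y}_2}$ and shows $\partial h/\partial\hat{y}_2=-\mathcal{D}_{KL}(\hat{y}_2\|\hat{y}_1)/\bigl((\log\hat{y}_1-\log\hat{y}_2)^2\hat{y}_2(1-\hat{y}_2)\bigr)\le 0$, a computation whose verification is the real work of the lemma. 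You instead observe that the denominator of $C_{\theta,\delta}$ is itself $\mathcal{D}_{KL}(\theta\|\theta/\delta)$, reparametrize by $t=\log(\hat{y}_1/\hat{y}_2)$, and invoke strict convexity of $\Phi(t)=\mathcal{D}_{KL}(\theta\|\theta e^{-t})$ together with $\Phi(0)=0$ and the monotone-secant-slope fact. This buys a cleaner argument: the identity for $C_{\theta,\delta}$ explains where the constant comes from (it is exactly the constant making the far case tight at $\hat{y}_2=\hat{y}_1/\delta$), the computation $\Phi''(t)=\theta(1-\theta)e^{-t}/(1-\theta e^{-t})^2>0$ is mechanical, and as a by-product you recover $C_{\theta,\delta}>0$ from Lemma \ref{lemma:monotonicity} without a separate argument. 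The details all check out: the domain $t>\log\theta$ contains $0$, $s$, and $t$ in the relevant case; $\Phi(s)>0$ because $\delta>1$; and the final relaxation $\Phi(t)\ge\frac{t}{s}\Phi(s)\ge\frac{t-s}{s}\Phi(s)$ is valid since $\Phi(s)\ge 0$. The paper's route is more elementary in that it never leaves the original variables, but yours exposes the convexity structure that the explicit derivative computation hides.
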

	\begin{proof}
		When $\hat{y}_1<\delta\hat{y}_2$, we have $\log (\hat{y}_1/\hat{y}_2) \leq \log \delta$, thus \eqref{eq:triangle-simplified} holds due to the nonnegativity of the KL divergence. Now we discuss the case when $\hat{y}_1\geq \delta\hat{y}_2$. Consider the ratio between $\mathcal{D}_{KL}(\hat{y}_1\|\hat{y}_2)$ and $\log (\hat{y}_1/\hat{y}_2)$:
		\begin{equation*}
		\frac{\mathcal{D}_{KL}(\hat{y}_1\|\hat{y}_2)}{\log (\hat{y}_1/\hat{y}_2)}=\hat{y}_1 + (1-\hat{y}_1)\frac{\log(1-\hat{y}_1) -\log(1-\hat{y}_2)}{\log \hat{y}_1-\log \hat{y}_2}=\hat{y}_1 + (1-\hat{y}_2)h(\hat{y}_1,\hat{y}_2)
		\end{equation*}
		where $h(\hat{y}_1,\hat{y}_2) = \log\frac{1-\hat{y}_1}{1-\hat{y}_2}/\log\frac{\hat{y}_1}{\hat{y}_2}$. It is easy to show that
		\begin{equation*}
		\frac{\partial h}{\partial \hat{y}_2} = -\frac{\mathcal{D}_{KL}(\hat{y}_2\|\hat{y}_1)}{(\log\hat{y}_1-\log\hat{y}_2)^2\hat{y}_2(1-\hat{y}_2)} \leq 0
		\end{equation*}
		Therefore according to $\hat{y}_1\geq \delta \hat{y}_2$, we have $h(\hat{y}_1,\hat{y}_2)\geq h(\hat{y}_1,\hat{y}_1/\delta)$, and
		\begin{equation*}
		\frac{\mathcal{D}_{KL}(\hat{y}_1\|\hat{y}_2)}{\log (\hat{y}_1/\hat{y}_2)} \geq\hat{y}_1 + (1-\hat{y}_1)\frac{\log(1-\hat{y}_1) -\log(1-\hat{y}_1/\delta)}{\log \hat{y}_1-\log (\hat{y}_1/\delta)} = \frac{1}{C_{\hat{y}_1,\delta}}
		\end{equation*}
		Therefore \eqref{eq:triangle-simplified} also holds when $\hat{y}_1\geq\delta\hat{y}_2$.
	\end{proof}
	\begin{proof}[Proof of Proposition \ref{prop:triangle-logloss}]
		We first prove the case when $y=1$. In this case, we have $\theta \leq \hat{y}_1$, and
		\begin{equation*}
		\begin{split}
		L(\hat{y}_2,y)-L(\hat{y}_1,y) &= \log\frac{\hat{y}_1}{\hat{y}_2} \\
		&\leq C_{\hat{y}_1,\delta}\mathcal{D}_{KL}(\hat{y}_1\|\hat{y}_2) + \log \delta \\
		&\leq C_{\theta,\delta}\mathcal{D}_{KL}(\hat{y}_1\|\hat{y}_2) + \log \delta
		\end{split}
		\end{equation*}
		where the first and second inequalities are according to Lemmas \ref{lemma:simplified} and \ref{lemma:monotonicity}. For $y=0$, we can let $y'=1-y,\hat{y}'_1=1-\hat{y}_1,$ and $\hat{y}'_2=1-\hat{y}_2$, and use the same discussion for $y',\hat{y}'_1$ and $\hat{y}'_2$.
	\end{proof}